\documentclass[letterpaper]{article}
\usepackage[preprint]{aaai2027}
\usepackage[hyphens]{url}
\usepackage{graphicx}
\usepackage{natbib}
\usepackage{caption}
\usepackage{booktabs}
\usepackage{array}
\usepackage{amsfonts}
\usepackage{amsmath}
\usepackage{amssymb}
\usepackage{xspace}
\usepackage{amsthm}
\usepackage{algorithm}
\usepackage{algpseudocode}
\usepackage{longtable}

\providecommand{\pdfinfo}[1]{}
\newcommand{\method}{PATH\xspace}
\newcommand{\benchmark}{PATHBench\xspace}
\newcommand{\validthreshold}{0.885}

\newcolumntype{L}[1]{>{\raggedright\arraybackslash}p{#1}}
\newcolumntype{R}[1]{>{\raggedleft\arraybackslash}p{#1}}
\newcolumntype{C}[1]{>{\centering\arraybackslash}p{#1}}
\newtheorem{proposition}{Proposition}[section]

\title{PATH: Next-Interval Prediction via Autoregressive Tree Hierarchy\\on Tabular Data}

\author{
Pengxiang Cai\textsuperscript{\rm 1}\equalcontrib,
Wanchen Lian\textsuperscript{\rm 1}\equalcontrib,
Chenyang Liu\textsuperscript{\rm 1},
Xiaohan Li\textsuperscript{\rm 1},\\
Qingyuan Zeng\textsuperscript{\rm 1},
Jinhong Wang\textsuperscript{\rm 2},
Jintai Chen\textsuperscript{\rm 1}\corresponding
}
\affiliations{
\textsuperscript{\rm 1}The Hong Kong University of Science and Technology (Guangzhou)\\
\textsuperscript{\rm 2}Ant Group\\
jintaiCHEN@hkust-gz.edu.cn
}

\begin{document}

\maketitle

\begin{abstract}
Interval prediction aims to achieve a target coverage level while producing intervals that are as short as possible. Many conformal regression pipelines first predict an uncertainty surrogate and then convert it into an interval through calibration or selection. This separation supports coverage calibration, but post hoc rules largely determine the final interval and do not fully use the learned output distribution. We observe that the resulting intervals have inherently hierarchical geometry: an interval can be recursively refined into nested subintervals, and binary trees naturally represent this structure. We formulate this hierarchy as next-interval prediction and propose \method, which learns how probability mass flows from each interval to its next nested subintervals. \method predicts a base leaf distribution and uses an autoregressive decoder to refine branch probabilities. Matching the distribution to the interval hierarchy aligns learning with extraction: \method accumulates probability over adjacent output intervals and returns the shortest contiguous range reaching a selected mass. We compare \method with 24 baselines for interval prediction on \benchmark, comprising 56 OpenML regression datasets. \method substantially shortens the resulting intervals, achieving the lowest mean normalized length, \(0.1473\), while maintaining mean coverage of \(0.9144\). These results establish hierarchical output modeling as an effective approach for compact interval prediction on tabular data. Code is publicly available at \url{https://github.com/pxcai/PATH}.
\end{abstract}

\section{Introduction}

Interval prediction turns a tabular regressor from a point predictor into a tool for decision support. In applications such as risk assessment, demand forecasting, and scientific modeling, users need both a predicted value and an uncertainty range for possible outcomes. The central performance goal is interval length under a coverage constraint: intervals should reach the desired coverage level, and among intervals that do so, shorter intervals are more informative. Figure~\ref{fig:path-overview} provides an overview of the \method prediction pipeline.

Conformal prediction provides a general way to construct reliable sets under exchangeability. At a high level, many conformal regression methods share a common division of labor: a learned predictor or partition defines an uncertainty surrogate, and a calibration or interval selection rule converts that surrogate into an output interval. Split conformal prediction calibrates residual scores from a point predictor~\citep{vovk2005algorithmic,lei2018distributionfree,angelopoulos2023conformal}. Conformalized quantile regression calibrates lower and upper conditional quantiles~\citep{koenker1978regression,romano2019conformalized}. More recent methods improve efficiency by using conditional histograms, thresholding interquantile regions, selecting conditional interquantile intervals, localizing calibration with trees or forests, or applying conformal set construction after response discretization~\citep{sesia2021conformal,luo2025conformal,guo2026fast,cabezas2024regression,guha2023conformal}. The surrogate differs across methods, but the interval itself is usually shaped by a rule applied after training around the learned object.

\begin{figure}[t]
\centering
\includegraphics[width=\columnwidth]{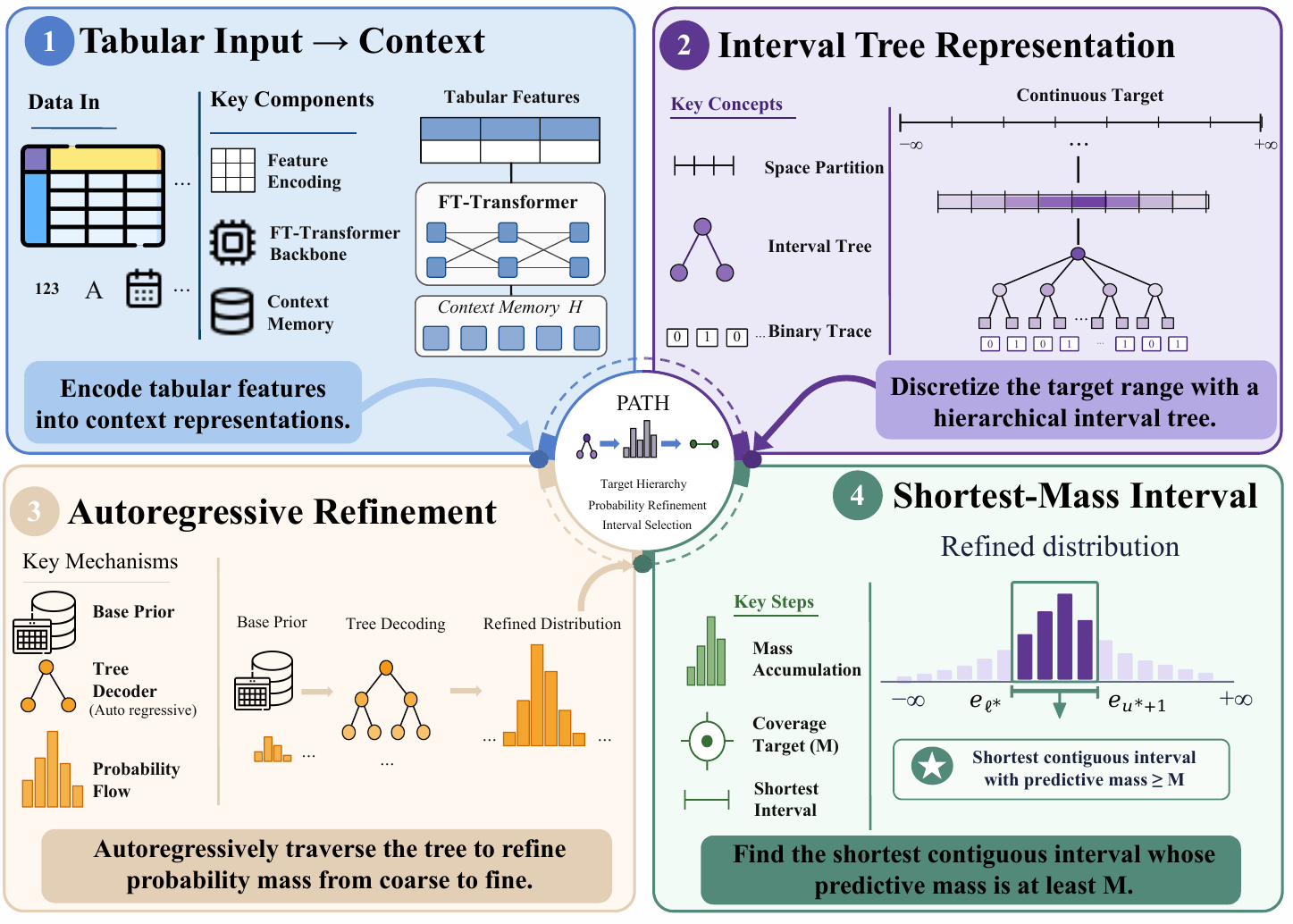}
\caption{Overview of the \method prediction pipeline.}
\label{fig:path-overview}
\end{figure}

This structure is effective for coverage, but it leaves an output-side opportunity. The final prediction object is a short contiguous segment with hierarchical geometry: any interval can be recursively divided into nested subintervals, from coarse uncertainty regions to precise boundaries. Standard surrogate objectives do not explicitly expose this hierarchy to the model. We ask whether a model can instead learn a distribution aligned with extracting short contiguous intervals.

Binary trees naturally capture this structure. A tree over the target range recursively splits coarse intervals into finer intervals, and each target value is represented by the path to its leaf. At each level, the model predicts how probability mass flows from the current interval to its next nested subintervals; we call this formulation \emph{next-interval prediction}. Coarse uncertainty and fine boundary placement therefore become part of the same distribution, directly aligning the model output with compact interval extraction.

We propose \method, a tree autoregressive model for interval prediction on tabular data. The continuous target is discretized into ordered leaves of a complete binary tree, and each training value is represented by the binary path to its leaf. A tabular transformer predicts a base distribution over leaves. An autoregressive decoder reads prefixes in the interval tree and predicts residual branch logits that refine how much mass is assigned to the left and right child branches at each node. Enumerating the tree yields a refined probability distribution over ordered output intervals. At inference time, \method sums probabilities over adjacent output intervals and returns the shortest contiguous target interval whose total mass exceeds a validation-selected threshold.

We evaluate \method against 24 baselines for interval prediction on \benchmark, a benchmark assembled from 56 OpenML regression datasets. The main comparison is repeated with 10 random seeds for each dataset. \method achieves the shortest mean normalized interval length among all compared methods while maintaining mean coverage above the nominal 0.9 target. Depth and model-design ablations further show that finer interval hierarchies and autoregressive refinement improve interval efficiency.

We make three main contributions:
\begin{itemize}
  \item We formulate interval prediction on tabular data as next-interval prediction: recursive coarse-to-fine refinements are represented by a binary interval tree, matching the learned distribution to the compact contiguous intervals extracted at inference time.
  \item We propose \method, a tree-autoregressive method that encodes continuous targets as binary traces, predicts a base leaf distribution, refines branch probabilities with autoregressive residual logits, and constructs intervals by accumulating probability mass over adjacent output intervals.
  \item We provide a systematic evaluation of tree-structured interval modeling on \benchmark against 24 interval baselines, repeating the main comparison with 10 random seeds for each dataset. \method achieves the shortest mean normalized interval length among all compared methods while maintaining empirical mean coverage above the nominal \(0.9\) target, and the ablations measure depth scaling, validation-target behavior, and the contribution of autoregressive refinement.
\end{itemize}

\section{Related Work}

\paragraph{Conformal and adaptive intervals.}
Split conformal prediction calibrates held-out residuals under exchangeability, while conformalized quantile regression calibrates conditional quantile endpoints~\citep{vovk2005algorithmic,lei2018distributionfree,angelopoulos2023conformal,koenker1978regression,romano2019conformalized}. Both pair naturally with XGBoost, LightGBM, CatBoost, random forests, and quantile regression forests~\citep{chen2016xgboostb,ke2017lightgbma,prokhorenkova2018catboostb,breiman2001random,meinshausen2006quantile}. More adaptive methods estimate conditional histograms, threshold candidate regions, select conditional interquantile intervals, localize calibration with trees or forests, or discretize regression as classification~\citep{sesia2021conformal,luo2025conformal,guo2026fast,cabezas2024regression,guha2023conformal}. These approaches motivate our baseline suite and establish localization and response discretization as effective routes to shorter intervals.

\paragraph{Neural tabular and structured-output models.}
Deep tabular models improve representations through feature selection, contextual embeddings, differentiable trees, feature tokenization, and pretrained language models~\citep{arik2021tabnetb,huang2020tabtransformera,popov2019neuralb,somepalli2021saintb,gorishniy2021revisitingb,yan2024making}. Neural uncertainty methods use ensembles, direct interval losses, evidential learning, mixture densities, or distributional regression~\citep{lakshminarayanan2017simple,pearce2018highquality,amini2020deep,bishop1994mixture,rugamer2023deepregression}. Structured regression outputs include ordinal labels, CDF targets, and predictive distributions~\citep{cao2020rank,fu2018deep,li2021deep,izbicki2021cdsplit,vovk2017nonparametric}. \method builds on this output-side perspective by organizing probability over nested target intervals, refining tree-prefix probabilities, and extracting adjacent mass directly as the final interval.

\begin{figure*}[!t]
\centering
\includegraphics[width=\textwidth]{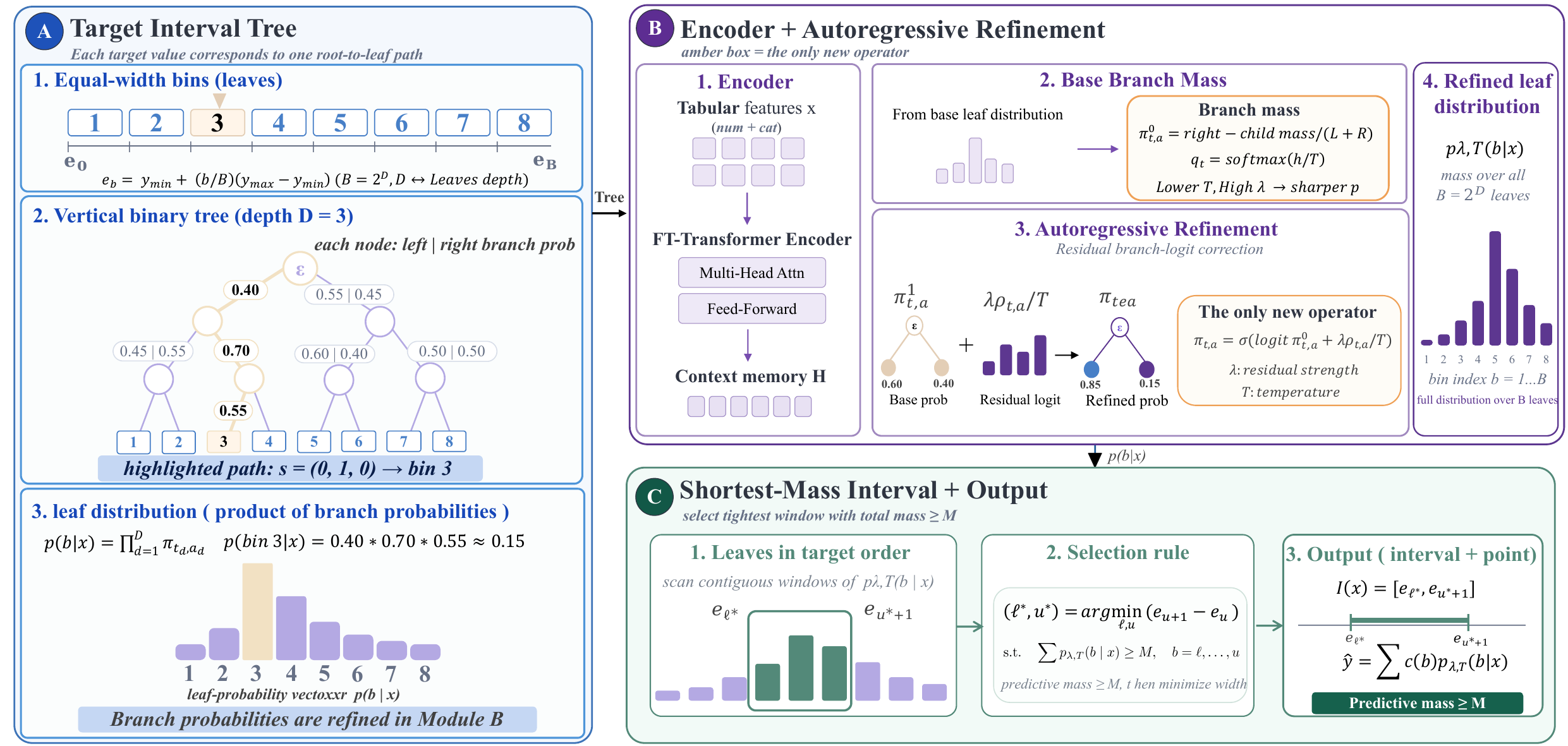}
\caption{\method architecture. A binary interval tree defines ordered leaves, an FT-Transformer predicts base probabilities, autoregressive residual logits refine branches, and adjacent probability mass yields the shortest selected interval.}
\label{fig:path-method}
\end{figure*}

\section{Method}

\method learns a structured distribution in the output space, refines it with an autoregressive interval hierarchy, and extracts compact intervals by accumulating probability mass over adjacent output intervals. Figure~\ref{fig:path-method} illustrates the complete \method architecture. The construction has three parts. First, the training target range is organized as a complete binary tree whose leaves are ordered target intervals. Second, a tabular encoder predicts a base distribution over these leaves, and an autoregressive decoder refines branch probabilities associated with tree prefixes. Third, validation data select a small set of inference parameters, after which prediction accumulates probability mass over adjacent output intervals and returns the shortest contiguous interval whose total mass reaches the selected threshold.

\subsection{Target Tree Encoding}

Let \((x_i,y_i)\) be a tabular regression dataset with scalar target \(y_i\). For depth \(D\), we define \(B=2^D\) ordered leaf intervals by fitting boundaries on the training target range:
\begin{equation}
\label{eq:leaf-boundaries}
  e_b = y_{\min}^{\mathrm{tr}} + \frac{b}{B}
  \left(y_{\max}^{\mathrm{tr}}-y_{\min}^{\mathrm{tr}}\right),
  \qquad b=0,\ldots,B .
\end{equation}
Uniform boundaries are used because the evaluation target is interval length in the original target units. With equal-width leaves, a contiguous set of leaves has a direct length interpretation and each tree level corresponds to a geometric refinement of the target range. Quantile boundaries are useful when class balance is the main objective, but their nonuniform widths separate class frequency from interval length. The benchmark therefore uses uniform boundaries to align target resolution with interval efficiency.

Each target is assigned to a clipped leaf index
\begin{equation}
\label{eq:leaf-index}
  z_i =
  \min\left\{B-1,\,
  \max\left(0,\,
  \left\lfloor
  \frac{B(y_i-y_{\min}^{\mathrm{tr}})}
  {y_{\max}^{\mathrm{tr}}-y_{\min}^{\mathrm{tr}}}
  \right\rfloor
  \right)\right\}.
\end{equation}
The binary expansion of \(z_i\) gives a trace \(s_i=(s_{i,1},\ldots,s_{i,D})\). For a prefix \(a\in\{0,\ldots,2^t-1\}\) at depth \(t\), the corresponding target region covers the leaves
\begin{equation}
\label{eq:tree-cell}
  \mathcal{C}_t(a)
  =
  \{b:\, a2^{D-t} \le b < (a+1)2^{D-t}\}.
\end{equation}
The trace records which child interval contains the target at every refinement step. This representation preserves the scalar target ordering while exposing coarse-to-fine interval structure to the model.

\subsection{Base Distribution and Autoregressive Residuals}

The tabular encoder maps features to a memory sequence \(H=f_\theta(x)\). Numeric features are tokenized with affine embeddings for each feature, categorical features use learned embeddings, and a CLS token summarizes the row. Our default implementation follows the FT-Transformer design~\citep{gorishniy2021revisitingb}. The output side then combines a direct leaf distribution with autoregressive corrections defined on the interval tree.

The direct leaf head produces base logits \(h^{\mathrm{leaf}}(x)\in\mathbb{R}^{B}\). At inference temperature \(T\), the base target distribution is
\begin{equation}
\label{eq:base-leaf-distribution}
  q_T(b\mid x)=
  \frac{\exp(h^{\mathrm{leaf}}_b(x)/T)}
  {\sum_{b'=0}^{B-1}\exp(h^{\mathrm{leaf}}_{b'}(x)/T)} .
\end{equation}
For each tree node \(\mathcal{C}_t(a)\), let \(L_t(a)\) and \(R_t(a)\) be its left and right child leaf sets. The base probability of the right branch is
\begin{equation}
\label{eq:base-branch-probability}
  \pi^{0}_{t,a}(x)
  =
  \frac{\sum_{b\in R_t(a)}q_T(b\mid x)}
  {\sum_{b\in L_t(a)\cup R_t(a)}q_T(b\mid x)} .
\end{equation}
The decoder receives \([\mathrm{SOS}]\) followed by the prefix bits of node \(a\) and attends to \(H\). It outputs a residual branch logit \(\rho_{t,a}(x)\), which corrects the base branch probability:
\begin{equation}
\label{eq:residual-branch-probability}
  \pi^{\lambda}_{t,a}(x)
  =
  \sigma\left(
  \mathrm{logit}\left(\pi^{0}_{t,a}(x)\right)
  + \lambda\,\rho_{t,a}(x)/T
  \right),
\end{equation}
where \(\lambda\) controls the strength of the autoregressive correction. Enumerating every tree prefix yields the refined leaf distribution
\begin{equation}
\label{eq:refined-leaf-distribution}
\begin{aligned}
  p_{\lambda,T}(b\mid x)
  ={}& \prod_{t=0}^{D-1}
  \left(\pi^{\lambda}_{t,a_t(b)}(x)\right)^{s_{t+1}(b)} \\
  &\times
  \left(1-\pi^{\lambda}_{t,a_t(b)}(x)\right)^{1-s_{t+1}(b)} .
\end{aligned}
\end{equation}
When \(\lambda=0\), the model falls back to the direct leaf distribution up to numerical clamping. The decoder is therefore an autoregressive distributional correction rather than a greedy path generator, and all leaves remain available when the final interval is extracted.

\subsection{Training Objective}

During training, temperature scaling is disabled and the residual branch uses \(\lambda_{\mathrm{train}}=1\); temperature is selected only at evaluation time. Let \(q\) denote the direct leaf distribution and \(p\) the autoregressively refined distribution. We supervise both distributions with three losses defined over output intervals. The leaf loss is standard cross entropy:
\begin{equation}
\label{eq:leaf-loss}
  \mathcal{L}_{\mathrm{leaf}}(r)
  =
  -\frac{1}{n}\sum_{i=1}^{n}\log r(z_i\mid x_i),
  \qquad r\in\{q,p\}.
\end{equation}
The prefix loss encourages probability mass to remain inside the true region at each level of refinement:
\begin{equation}
\label{eq:prefix-loss}
  \mathcal{L}_{\mathrm{prefix}}(r)
  =
  -\frac{1}{n(D-1)}
  \sum_{i=1}^{n}\sum_{t=1}^{D-1}
  \log
  \sum_{b\in \mathcal{C}_t(a_{i,t})}
  r(b\mid x_i).
\end{equation}
The CDF loss supervises the ordered cumulative distribution using binary labels over thresholds:
\begin{equation}
\label{eq:cdf-loss}
\begin{aligned}
  \mathcal{L}_{\mathrm{cdf}}(r)
  ={}& -\frac{1}{n(B-1)}
  \sum_{i=1}^{n}\sum_{k=0}^{B-2}
  \Bigl[
  \mathbf{1}[k\ge z_i]\log F_r(k\mid x_i) \\
  &\qquad + \mathbf{1}[k<z_i]
  \log\!\left(1-F_r(k\mid x_i)\right)
  \Bigr],
\end{aligned}
\end{equation}
where \(F_r(k\mid x)=\sum_{b\le k}r(b\mid x)\). The final objective is
\begin{equation}
\label{eq:training-objective}
\begin{aligned}
  \mathcal{L} ={}&
  0.5\Bigl(
  \mathcal{L}_{\mathrm{leaf}}(q)
  +\mathcal{L}_{\mathrm{prefix}}(q)
  +\mathcal{L}_{\mathrm{cdf}}(q)
  \Bigr) \\
  &+\mathcal{L}_{\mathrm{leaf}}(p)
  +\mathcal{L}_{\mathrm{prefix}}(p)
  +\mathcal{L}_{\mathrm{cdf}}(p) \\
  &+0.01\,\mathcal{L}_{\mathrm{res}}.
\end{aligned}
\end{equation}
where \(\mathcal{L}_{\mathrm{res}}\) is a squared penalty on residual logits, weighted by node mass. The direct leaf terms anchor the global distribution, while the autoregressive terms train the refined tree distribution used at inference time.

\subsection{Interval Extraction and Validation Selection}

For a test point, \method obtains \(p_{\lambda,T}(b\mid x)\), a probability distribution over ordered output intervals. The point prediction is the posterior mean over leaf centers. For interval prediction, given mass \(M\), the model sums probabilities over adjacent output intervals and selects the shortest contiguous target interval with total probability mass at least \(M\):
\begin{equation}
\label{eq:shortest-mass-interval}
\begin{aligned}
  (\ell^*,u^*) ={}&
  \arg\min_{0\le \ell\le u<B} (e_{u+1}-e_{\ell}) \\
  &\mathrm{s.t.}\quad
  \sum_{b=\ell}^{u}p_{\lambda,T}(b\mid x)\ge M .
\end{aligned}
\end{equation}
The interval is \(I(x)=[e_{\ell^*},e_{u^*+1}]\).

The inference parameters \(T\), \(M\), and \(\lambda\) are selected on the validation split. For a selection target \(\eta\), we enumerate a fixed grid and first retain candidates whose validation coverage is at least \(\eta\). Among those candidates, we choose the one with the shortest validation interval. If no candidate reaches \(\eta\), we choose the candidate closest to the target coverage and use length to break ties. The main setting uses \(\eta=0.905\), and the selected interval is then evaluated on the test split.

Appendix B summarizes the full pipeline in pseudocode, with each operation tied to the corresponding equation above. Appendix A records basic properties of this construction: the refined tree distribution is normalized, \(\lambda=0\) recovers the direct leaf distribution, shortest-mass extraction is the exact optimizer over contiguous leaf intervals, the discretization error of grid-aligned interval extraction decreases with depth, and finite-grid validation selection admits a standard concentration bound.

\section{Experiments}
\label{sec:experiments}

The experiments evaluate four aspects of \method. We first compare its interval compactness with 24 baselines under matched empirical coverage requirements. We then test whether the advantage persists at the nominal \(0.900\) coverage level and across repeated data splits. Finally, we examine scaling with tree depth and isolate the contributions of structured output modeling and autoregressive refinement.

\subsection{Experimental Protocol}

\paragraph{Datasets and splits.}
\benchmark comprises 56 OpenML regression datasets~\citep{vanschoren2014openml}. Figure~\ref{fig:pathbench-overview} summarizes its variation in sample size, input dimensionality, and feature composition. The main comparison uses random seeds \(0,\ldots,9\), each defining an independent 60/20/20 train/validation/test split, producing 560 evaluations per method. Preprocessing parameters and target boundaries are fitted independently on each training split. Numeric features are standardized using training statistics, while categorical features use ordinal encoding with a reserved unknown category. The model-design ablation uses the same 10 random seeds. Depth, validation-target, and distributional analyses use the representative split unless stated otherwise.

\begin{figure}[t]
\centering
\includegraphics[width=\columnwidth]{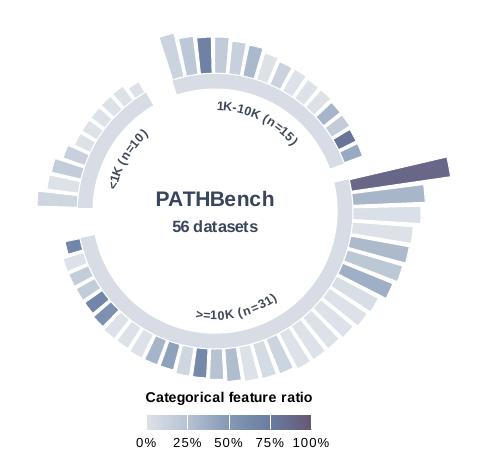}
\caption{Composition of \benchmark. Arc span shows dataset count by sample-size group; radial-bar length and color encode feature count (square-root scale) and categorical-feature proportion.}
\label{fig:pathbench-overview}
\end{figure}

\paragraph{\method configuration.}
The main configuration uses \(D=8\), yielding 256 ordered leaf intervals with uniform boundaries over the training target range. The encoder is an FT-Transformer with \(d_{\mathrm{model}}=64\), four attention heads, two layers, dropout \(0.1\), and ReGLU feed-forward layers. We train for 60 epochs using AdamW with learning rate \(10^{-3}\) and weight decay \(10^{-4}\). Validation loss under teacher forcing selects the checkpoint. For interval selection, we evaluate the following inference grid on the validation set:
{\small
\[
\begin{aligned}
M &\in \{0.40,0.45,\ldots,0.90,0.92,0.95\},\\
T &\in \{0.7,0.8,0.9,1.0,1.2,1.5\},\\
\lambda &\in \{0,0.25,0.5,0.75,1.0\}.
\end{aligned}
\]
}
Here \(M\) denotes the accumulated probability mass, \(T\) controls inference-time distribution sharpness, and \(\lambda\) scales the autoregressive residual. Among settings reaching validation coverage \(\eta\), we select the one with the shortest mean interval. The main setting uses \(\eta=0.905\). The scaling analysis additionally considers \(\eta\in\{0.905,0.910,0.915\}\) and \(D\in\{4,5,6,7,8\}\).

\paragraph{Baselines.}
We compare \method with 24 baselines for interval prediction. The suite includes split conformal prediction~\citep{vovk2005algorithmic,lei2018distributionfree,angelopoulos2023conformal} using CatBoost, XGBoost, LightGBM, random forests, and MLPs~\citep{prokhorenkova2018catboostb,chen2016xgboostb,ke2017lightgbma,breiman2001random}. It also includes conformalized quantile regression~\citep{koenker1978regression,romano2019conformalized} using boosted trees, quantile regression forests~\citep{meinshausen2006quantile}, and MLPs. Further comparisons cover QRF- and MLP-based CHR, CTI, CIR, and CIR+~\citep{sesia2021conformal,luo2025conformal,guo2026fast}, MLP-R2CCP~\citep{guha2023conformal}, RF-TreeQuantile-CQR, and LOCART-family methods~\citep{cabezas2024regression}. Every method uses identical random seeds and evaluation metrics.

\paragraph{Metrics and comparison rule.}
We report empirical coverage, mean and median interval length normalized by the training target range, mean rank, and wins. A method is valid on an evaluation when test coverage reaches \(\validthreshold\), allowing a \(0.015\) tolerance below the nominal \(0.900\) target. Valid methods are ranked by normalized length. Remaining methods follow by coverage error and then length. This rule measures compactness only among methods satisfying the empirical coverage requirement.

\subsection{Main Comparison}

Table~\ref{tab:cross-dataset-summary} summarizes the 560-evaluation main comparison. \method reaches mean coverage \(0.9144\), above the nominal \(0.900\) target. It also achieves the shortest mean and median normalized lengths, \(0.1473\) and \(0.0547\), respectively. These intervals translate into the best mean rank, \(5.25\), and the largest win count, 163. QRF-CTI ranks second with 130 wins, but its mean normalized length is \(0.2017\), \(36.9\%\) longer than that of \method. Thus, the repeated-split comparison establishes a substantial interval-length advantage at comparable mean coverage.

\begin{table}[!ht]
\centering
\caption{Main 10-seed comparison on \benchmark against 24 baselines. \method uses \(D=8\) and \(\eta=0.905\); ranking requires coverage \(\geq0.885\) before interval length. SplitCP: split conformal prediction; TQ: TreeQuantile.}
\label{tab:cross-dataset-summary}
\scriptsize
\setlength{\tabcolsep}{0.8pt}
\renewcommand{\arraystretch}{0.92}
\begin{tabular}{@{}L{0.31\columnwidth}C{0.11\columnwidth}C{0.16\columnwidth}C{0.16\columnwidth}C{0.11\columnwidth}C{0.09\columnwidth}@{}}
\toprule
Method & Cov. & \shortstack{Mean\\len/rng \(\downarrow\)} & \shortstack{Median\\len/rng \(\downarrow\)} & Rank \(\downarrow\) & Wins \(\uparrow\) \\
\midrule
\textbf{PATH} & 0.9144 & \textbf{0.1473} & \textbf{0.0547} & \textbf{5.25} & \textbf{163} \\
QRF-CTI & 0.9201 & 0.2017 & 0.1215 & 8.38 & 130 \\
QRF-CHR & 0.9046 & 0.2110 & 0.0856 & 9.03 & 12 \\
QRF-CIR & 0.8995 & 0.1800 & 0.0741 & 9.45 & 9 \\
XGBoost-CQR & 0.9047 & 0.2120 & 0.0937 & 9.96 & 6 \\
CatBoost-CQR & 0.9038 & 0.1786 & 0.0817 & 10.03 & 9 \\
LightGBM-CQR & 0.9059 & 0.2002 & 0.0833 & 10.29 & 7 \\
QRF-CIR+ & 0.8951 & 0.1760 & 0.0738 & 10.73 & 32 \\
CatBoost-SplitCP & 0.9034 & 0.1967 & 0.1298 & 11.94 & 26 \\
MLP-CTI & 0.9009 & 0.2900 & 0.2041 & 12.67 & 49 \\
XGBoost-SplitCP & 0.9035 & 0.2162 & 0.1372 & 12.77 & 27 \\
QRF-CQR & 0.9222 & 0.2159 & 0.0963 & 12.99 & 0 \\
LightGBM-SplitCP & 0.9042 & 0.2242 & 0.1407 & 13.91 & 14 \\
RF-SplitCP & 0.9043 & 0.2277 & 0.1549 & 14.08 & 2 \\
MLP-R2CCP & 0.8586 & 0.1953 & 0.1009 & 14.51 & 40 \\
RF-A-LOCART & 0.9043 & 0.2649 & 0.1664 & 14.65 & 4 \\
RF-TQ-CQR & 0.9241 & 0.2431 & 0.1295 & 14.83 & 2 \\
MLP-CHR & 0.8750 & 0.2533 & 0.1702 & 14.92 & 12 \\
RF-A-LOFOREST & 0.9001 & 0.2668 & 0.1738 & 15.32 & 0 \\
MLP-CIR+ & 0.8643 & 0.2626 & 0.1170 & 15.48 & 12 \\
RF-LOCART & 0.9090 & 0.2682 & 0.1683 & 15.78 & 0 \\
MLP-CIR & 0.9149 & 0.3330 & 0.1964 & 15.90 & 1 \\
MLP-SplitCP & 0.9031 & 0.3869 & 0.1838 & 16.24 & 2 \\
RF-LOFOREST & 0.8988 & 0.2700 & 0.1741 & 16.80 & 0 \\
MLP-CQR & 0.9037 & 0.3646 & 0.2334 & 19.09 & 1 \\
\bottomrule
\end{tabular}
\end{table}

\begin{table}[t]
\centering
\caption{Strict \(0.900\)-coverage comparison over 560 evaluations. Valid methods rank by normalized length; the remainder rank by coverage error and length. Wins are first-place finishes.}
\label{tab:coverage-attainment}
\scriptsize
\setlength{\tabcolsep}{1.8pt}
\renewcommand{\arraystretch}{0.95}
\begin{tabular}{@{}L{0.29\columnwidth}C{0.29\columnwidth}C{0.18\columnwidth}C{0.18\columnwidth}@{}}
\toprule
Method & Cov. \(\ge .900\) (\%) \(\uparrow\) & Rank@.900 \(\downarrow\) & Wins@.900 \(\uparrow\) \\
\midrule
\textbf{\method} & \textbf{76.6} & \textbf{6.46} & \textbf{177} \\
QRF-CTI & 70.7 & 9.36 & 121 \\
QRF-CQR & 74.5 & 11.45 & 11 \\
LightGBM-CQR & 62.3 & 11.77 & 14 \\
CatBoost-CQR & 55.9 & 12.37 & 14 \\
QRF-CIR & 48.8 & 13.35 & 13 \\
QRF-CIR+ & 38.6 & 15.62 & 16 \\
\bottomrule
\end{tabular}
\end{table}

Table~\ref{tab:coverage-attainment} reports \method and six strong baselines under the nominal \(0.900\) coverage requirement. \method attains this level on \(76.6\%\) of evaluations, the highest rate among the listed methods. Across the complete ranking, it also records the best mean rank, \(6.46\), and the most wins, 177. The compactness advantage therefore remains when intervals must meet the nominal target directly. Appendix E.5 provides complete counts and metrics.

Figure~\ref{fig:performance-profile} measures how often each method approaches the shortest valid interval. For each evaluation, \(L^*\) is the minimum length among methods reaching \(0.900\) coverage. A method succeeds at tolerance \(\delta\) when its length is at most \((1+\delta)L^*\). \method has the highest success fraction throughout the profile. It lies within \(10\%\) and \(25\%\) of \(L^*\) on \(47.1\%\) and \(53.6\%\) of evaluations. The corresponding QRF-CTI rates are \(36.4\%\) and \(43.9\%\). The advantage therefore reflects frequent near-optimal performance, not only isolated wins.

\begin{figure}[t]
\centering
\includegraphics[width=\columnwidth]{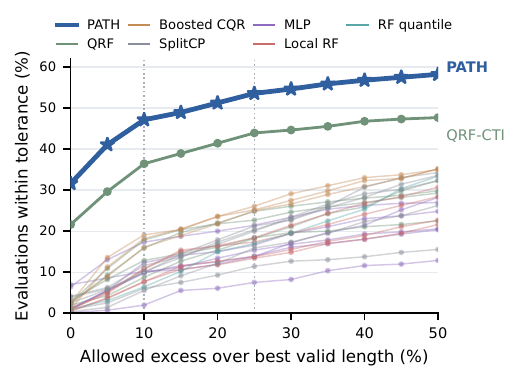}
\caption{Coverage-constrained profiles over 560 evaluations. Success at tolerance \(\delta\) requires coverage \(\geq0.900\) and length \(\leq(1+\delta)L^*\), where \(L^*\) is the shortest valid interval.}
\label{fig:performance-profile}
\end{figure}

Figure~\ref{fig:pairwise-length-advantage} quantifies both the magnitude and prevalence of the gain against six strong baselines. We first average normalized length over 10 seeds within each dataset, then compute paired baseline-minus-\method differences. Aggregate reductions range from \(16.3\%\) to \(31.8\%\), and \method is shorter on 36 to 54 of 56 datasets. All \(95\%\) dataset-bootstrap intervals remain above zero. The interval-length improvement is therefore broadly distributed across datasets, including the comparison with QRF-CTI.

\begin{figure}[t]
\centering
\includegraphics[width=\columnwidth]{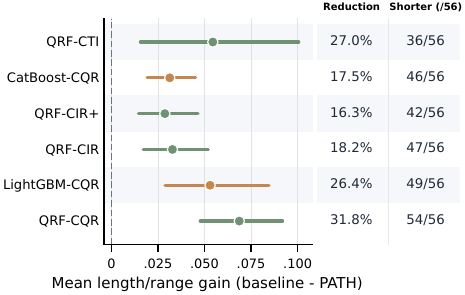}
\caption{Paired interval-length gains over six strong baselines. Dataset means average 10 seeds; points and lines show baseline-minus-\method gains and 95\% bootstrap intervals from 10,000 resamples. Positive values favor \method; right columns report overall reduction and positive-gain datasets.}
\label{fig:pairwise-length-advantage}
\end{figure}

\begin{figure}[t]
\centering
\includegraphics[width=\columnwidth]{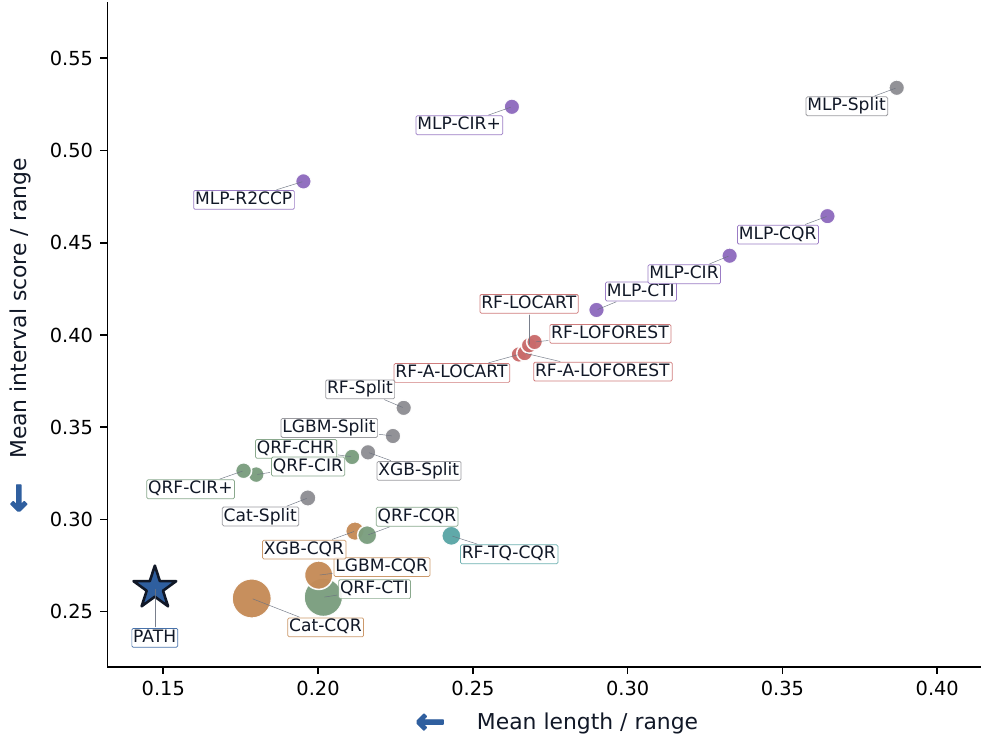}
\caption{Interval score versus normalized length over 10 splits; lower-left is better. Colors denote method families, and larger markers indicate lower scores. MLP-CHR is omitted as an outlier (length/range \(=0.2533\), score/range \(=0.8616\)).}
\label{fig:method-pareto-current}
\end{figure}

Figure~\ref{fig:method-pareto-current} examines whether compactness is obtained at the expense of overall interval score. \method occupies the shortest-length end of the lower-left performance frontier. CatBoost-CQR and QRF-CTI achieve slightly lower scores, but require substantially longer mean intervals. The plot therefore identifies \method's distinguishing advantage as interval compactness under the empirical coverage requirement. Appendix D reports all interval-score values.

\subsection{Depth and Validation Target Analysis}

Tree depth \(D\) determines output resolution, while validation target \(\eta\) controls the coverage level used for interval selection. Figure~\ref{fig:depth-ablation} evaluates \(D\in\{4,5,6,7,8\}\) and \(\eta\in\{0.905,0.910,0.915\}\) on the representative split. Appendix E.1 provides the numerical results.

\begin{figure}[t]
\centering
\includegraphics[width=\columnwidth]{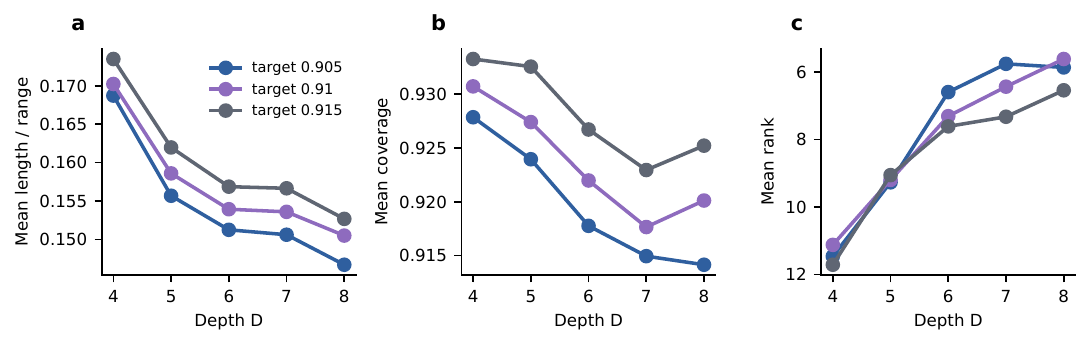}
\caption{Depth and validation-target scaling on the representative split: (a) mean normalized length, (b) mean test coverage, and (c) coverage-constrained mean rank.}
\label{fig:depth-ablation}
\end{figure}

Figure~\ref{fig:depth-ablation} shows the same depth trend at all three validation targets. At \(\eta=0.905\), increasing \(D\) from 4 to 8 reduces mean length/range from \(0.1687\) to \(0.1467\). Mean rank improves from \(11.46\) to \(5.86\), while coverage remains above \(0.914\). The trends remain consistent at \(\eta=0.910\) and \(0.915\). At \(D=8\), increasing \(\eta\) from \(0.905\) to \(0.915\) raises coverage from \(0.9141\) to \(0.9252\). The corresponding length increase is limited, from \(0.1467\) to \(0.1527\). These results establish depth as an effective scaling dimension for interval resolution and support \(D=8,\eta=0.905\) as the main compact setting.

\begin{figure*}[t]
\centering
\includegraphics[width=0.90\textwidth]{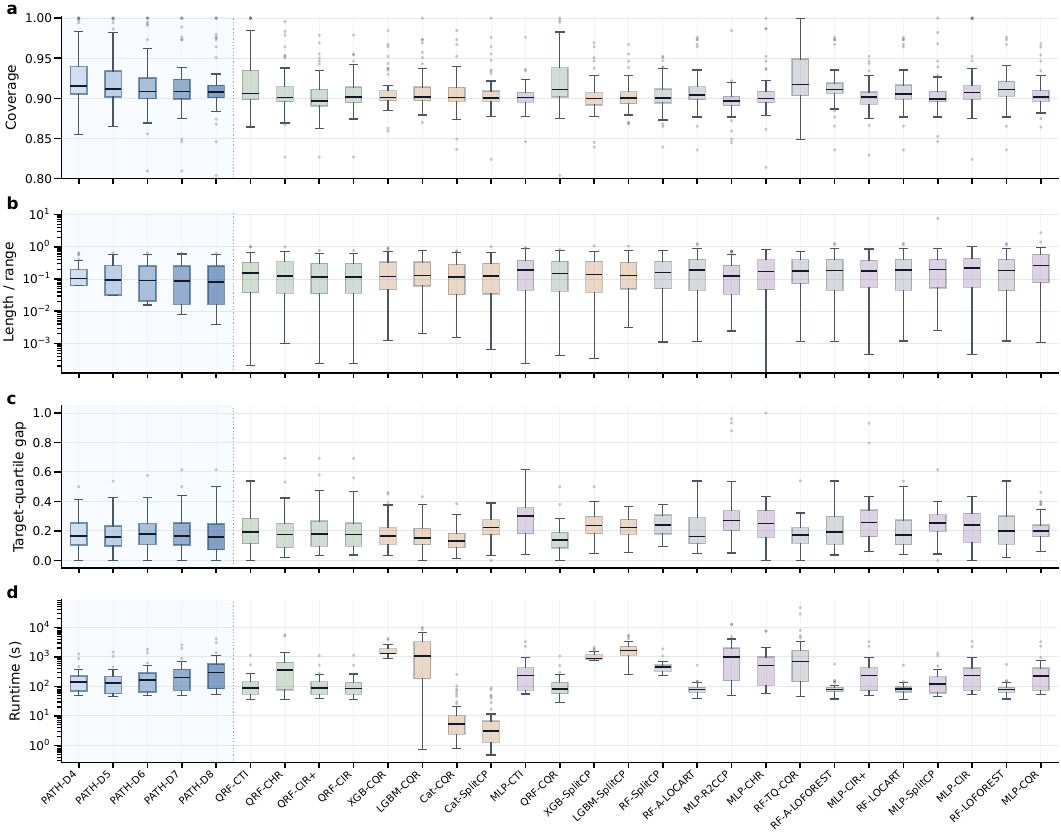}
\caption{Representative-split distributions for \method depths \(D=4,\ldots,8\) and 24 baselines: coverage, normalized length, target-quartile coverage gap, and runtime. Shading marks \method; length and runtime use logarithmic scales.}
\label{fig:full-distribution-diagnostics}
\end{figure*}

\subsection{Modeling Ablation}

Table~\ref{tab:modeling-ablation} isolates hierarchical output modeling and autoregressive refinement on \benchmark over 10 seeds. All variants use \(D=8\), \(\eta=0.905\), and identical selection and extraction. \method{} w/o AR retains the hierarchy, whereas Direct Transformer predicts leaves directly.

\begin{table}[t]
\centering
\caption{10-seed modeling ablation on \benchmark at \(D=8\), \(\eta=0.905\). Shorter (/56) counts datasets where \method has lower seed-averaged length.}
\label{tab:modeling-ablation}
\scriptsize
\setlength{\tabcolsep}{0.8pt}
\renewcommand{\arraystretch}{1.00}
\begin{tabular}{@{}L{0.32\columnwidth}C{0.11\columnwidth}C{0.16\columnwidth}C{0.16\columnwidth}C{0.19\columnwidth}@{}}
\toprule
Method & Cov. & \shortstack{Mean\\len/rng \(\downarrow\)} & \shortstack{Median\\len/rng \(\downarrow\)} & \shortstack{PATH shorter\\datasets \(\uparrow\)} \\
\midrule
\textbf{\method} & 0.9144 & \textbf{0.1473} & \textbf{0.0547} & -- \\
\method{} w/o AR & 0.9197 & 0.1506 & 0.0566 & 42/56 \\
Direct Transformer & 0.9195 & 0.1623 & 0.0625 & 52/56 \\
\bottomrule
\end{tabular}
\end{table}

All variants maintain mean coverage above \(0.914\). Removing autoregressive refinement raises mean length/range from \(0.1473\) to \(0.1506\), and direct leaf prediction raises it to \(0.1623\). Full \method is shorter on 42/56 and 52/56 datasets, with mean reductions of \(2.2\%\) and \(8.7\%\). The hierarchy provides the larger gain, while autoregression adds a consistent improvement.

\subsection{Distributional Diagnostics}

Figure~\ref{fig:full-distribution-diagnostics} compares coverage, normalized length, target-quartile gap (maximum minus minimum quartile coverage), and runtime. Greater depth shortens \method intervals while preserving near-target coverage and comparable gaps, at increased runtime.

\subsection{Further Analysis}

Appendix D gives aggregate and interval-score results. Appendix E.1 confirms depth and target scaling. Appendix E.2 measures validation-to-test transfer. Appendix E.3 favors direct extraction over scalar expansion. Appendix E.4 gives all-baseline pairwise comparisons. Appendix E.5 confirms gains under strict coverage. Appendix E.6 covers dataset factors, conditional coverage, and runtime. Appendix F lists per-dataset records.

\section{Conclusion}

Interval prediction often shapes final intervals after training from a surrogate. \method instead models their inherent hierarchical geometry. We formulate this structure as next-interval prediction: at each depth, the model predicts probability flow from a coarse interval to nested subintervals, linking broad uncertainty to boundary placement. Across \benchmark, 10 random seeds, and 24 baselines, \method attains the shortest mean normalized intervals while mean coverage remains above 0.9. Ablations confirm the benefits of finer hierarchies and autoregressive refinement. This establishes hierarchical geometry as a principle for compact intervals.

\clearpage
{\small
\bibliography{aaai2027}
}

\clearpage
\onecolumn
\appendix
\raggedbottom

\numberwithin{figure}{section}
\numberwithin{table}{section}
\numberwithin{algorithm}{section}

\section*{Appendix}
\subsection*{Overview}
This appendix supports the next-interval prediction formulation with the theoretical properties and algorithm, the complete evaluation protocol, numerical ablations, interval-score and strict-coverage results, validation-selection diagnostics, dataset-level analyses, runtime comparisons, and per-dataset records. The main comparison and model-design ablation average results on \benchmark over 10 random seeds, while depth/target sweeps and detailed diagnostics use the representative split unless stated otherwise. \textbf{The accompanying arXiv package also includes the code and reproducibility assets}, comprising the training and evaluation scripts, the OpenML dataset manifest, and the random-seed protocol.

\section{Basic Properties of the Construction}
\label{app:basic-properties}

At each tree depth, next-interval prediction models how probability mass flows from a current interval to its nested children. This section records properties used to interpret the resulting \method construction. They establish that the tree decoder defines a valid target distribution, that the non-autoregressive setting is an exact special case, and that depth controls the grid approximation of contiguous interval extraction.

\begin{proposition}[Normalized tree distribution]
\label{prop:normalized-tree-distribution}
Fix \(x\), \(T\), and \(\lambda\). If every branch probability \(\pi^\lambda_{t,a}(x)\) lies in \([0,1]\), then the refined leaf distribution defined by the tree product satisfies
\[
  p_{\lambda,T}(b\mid x)\ge 0
  \quad\text{for all } b,
  \qquad
  \sum_{b=0}^{B-1} p_{\lambda,T}(b\mid x)=1 .
\]
\end{proposition}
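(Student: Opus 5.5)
Nonnegativity is immediate from the product formula \eqref{eq:refined-leaf-distribution}. Each factor is either \(\pi^\lambda_{t,a_t(b)}(x)\) or \(1-\pi^\lambda_{t,a_t(b)}(x)\), and by hypothesis both lie in \([0,1]\). A finite product of nonnegative numbers is nonnegative, so \(p_{\lambda,T}(b\mid x)\ge 0\). Here we use the convention \(0^0=1\), so that a factor whose exponent is zero contributes \(1\).

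For normalization, I would prove a stronger statement by backward induction on depth, using the cells \(\mathcal{C}_t(a)\) of \eqref{eq:tree-cell}. For a prefix \(a\) at depth \(t\), define the partial product over the first \(t\) levels,
\[
P_t(a)=\prod_{t'=0}^{t-1}\bigl(\pi^\lambda_{t',a_{t'}}\bigr)^{s_{t'+1}}\bigl(1-\pi^\lambda_{t',a_{t'}}\bigr)^{1-s_{t'+1}},
\]
where \(a_{t'}\) and \(s_{t'+1}\) are read off the bits of \(a\). This is well defined because \(a_{t'}(b)\) and \(s_{t'+1}(b)\) for \(t'<t\) depend only on the length-\(t\) prefix of \(b\), and that prefix is \(a\) exactly when \(b\in\mathcal{C}_t(a)\). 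The claim to prove is
\[
\sum_{b\in\mathcal{C}_t(a)}p_{\lambda,T}(b\mid x)=P_t(a).
\]

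The base case is \(t=D\). Each cell is then a single leaf, and the identity is just the definition of \(p_{\lambda,T}\).

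For the inductive step, \(\mathcal{C}_t(a)\) is the disjoint union of \(\mathcal{C}_{t+1}(2a)\) and \(\mathcal{C}_{t+1}(2a+1)\). These correspond to next bit \(0\) (the left child \(L_t(a)\)) and next bit \(1\) (the right child \(R_t(a)\)). By the induction hypothesis the two child sums are \(P_t(a)\bigl(1-\pi^\lambda_{t,a}\bigr)\) and \(P_t(a)\,\pi^\lambda_{t,a}\). Adding them gives \(P_t(a)\). Taking \(t=0\), the cell \(\mathcal{C}_0(0)\) is all \(B\) leaves and \(P_0\) is the empty product \(1\), so the total mass is \(1\).

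The only step needing care is the indexing. The map \(b\mapsto(s_1(b),\dots,s_D(b))\) must be a bijection between \(\{0,\dots,B-1\}\) and \(\{0,1\}^D\) via the binary expansion. The node \(a_t(b)=\lfloor b/2^{D-t}\rfloor\) must depend only on the first \(t\) bits. Finally, the right-branch convention (bit \(1\) gets \(\pi\)) must match the split of \(\mathcal{C}_t(a)\) into children. Once these are checked, the argument is the standard telescoping of a binary tree of Bernoulli choices. It also does not use the specific form \eqref{eq:residual-branch-probability} of \(\pi^\lambda\): only the hypothesis that each \(\pi^\lambda_{t,a}(x)\in[0,1]\) is needed.
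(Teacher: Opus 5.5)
Your proof is correct and is essentially the paper's argument. Both rest on the fact that a node's two child masses \(P_t(a)\bigl(1-\pi^\lambda_{t,a}\bigr)\) and \(P_t(a)\,\pi^\lambda_{t,a}\) sum to \(P_t(a)\). The paper runs the induction top-down: the total mass over all depth-\(t\) nodes stays \(1\), and the depth-\(D\) node masses are the leaf products. You run it bottom-up over subtree sums, and your explicit handling of the \(0^0=1\) convention and of the bit-to-right-child indexing covers details the paper leaves implicit.
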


\begin{proof}
Assign mass \(m_0(\emptyset)=1\) to the root. For a node \(a\) at depth \(t\), assign its left and right child masses as
\[
  m_{t+1}(a0)=m_t(a)\bigl(1-\pi^\lambda_{t,a}(x)\bigr),
  \qquad
  m_{t+1}(a1)=m_t(a)\pi^\lambda_{t,a}(x).
\]
Both child masses are nonnegative, and their sum is \(m_t(a)\). By induction over levels, the total mass over all nodes at depth \(t\) is one for every \(t\). At depth \(D\), each node corresponds to one leaf, and the recursive mass assigned to that leaf is exactly the product defining \(p_{\lambda,T}(b\mid x)\). Therefore the leaf probabilities are nonnegative and sum to one.
\end{proof}

\begin{proposition}[Direct distribution as a special case]
\label{prop:direct-special-case}
Assume the base leaf distribution satisfies \(q_T(b\mid x)>0\) for all leaves. When \(\lambda=0\), the refined distribution equals the direct leaf distribution:
\[
  p_{0,T}(b\mid x)=q_T(b\mid x),
  \qquad b=0,\ldots,B-1 .
\]
\end{proposition}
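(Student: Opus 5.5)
With \(\lambda=0\), the residual term in \eqref{eq:residual-branch-probability} disappears. Since \(\sigma(\mathrm{logit}(u))=u\) for \(u\in(0,1)\), we get \(\pi^{0}_{t,a}\) back exactly, so \(\pi^{\lambda}_{t,a}(x)=\pi^{0}_{t,a}(x)\) for every node. The positivity assumption \(q_T(b\mid x)>0\) is what makes this legitimate. Every denominator in \eqref{eq:base-branch-probability} is a sum of positive terms, so \(\pi^0_{t,a}\) is well defined. Both children are nonempty, so \(\pi^0_{t,a}\in(0,1)\) strictly, and therefore \(\mathrm{logit}(\pi^0_{t,a})\) is finite. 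No clamping is needed in this idealized statement.

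Next we show the tree product telescopes. For a node \(a\) at depth \(t\), write \(Q_t(a)=\sum_{b\in\mathcal{C}_t(a)}q_T(b\mid x)\). The leaf set \(\mathcal{C}_t(a)\) is exactly \(L_t(a)\cup R_t(a)\), and these children are the depth-\((t+1)\) cells with prefixes \(2a\) and \(2a+1\). Hence \(\pi^0_{t,a}=Q_{t+1}(2a+1)/Q_t(a)\) and \(1-\pi^0_{t,a}=Q_{t+1}(2a)/Q_t(a)\). Fix a leaf \(b\) with trace bits \(s_1(b),\dots,s_D(b)\) and prefixes \(a_t(b)\), so that \(a_{t+1}(b)=2a_t(b)+s_{t+1}(b)\). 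Then the factor at depth \(t\) in \eqref{eq:refined-leaf-distribution} equals \(Q_{t+1}(a_{t+1}(b))/Q_t(a_t(b))\) in both cases \(s_{t+1}(b)\in\{0,1\}\).

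The product over \(t=0,\dots,D-1\) therefore telescopes to \(Q_D(a_D(b))/Q_0(\emptyset)\). Here \(Q_0(\emptyset)=\sum_{b'}q_T(b'\mid x)=1\), and \(a_D(b)=b\) with \(\mathcal{C}_D(b)=\{b\}\), so \(Q_D(b)=q_T(b\mid x)\). This gives \(p_{0,T}(b\mid x)=q_T(b\mid x)\). A cleaner way to organize the same computation is the induction used in the proof of Proposition~\ref{prop:normalized-tree-distribution}: show by induction on \(t\) that the recursive node mass satisfies \(m_t(a)=Q_t(a)\). This holds at the root, and the branch-split identities above carry it from each level to the next.

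The only real subtlety is bookkeeping rather than difficulty. We must check that the paper's indexing is consistent: prefix \(a\) at depth \(t\) covers \(\mathcal{C}_t(a)\), its left and right children are the prefixes \(2a\) and \(2a+1\), and the bit \(s_{t+1}(b)\) selects the right child exactly when it equals \(1\). This follows from \eqref{eq:tree-cell}, since \(\mathcal{C}_t(a)\) splits at \(a2^{D-t}+2^{D-t-1}=(2a+1)2^{D-(t+1)}\). We should also note that the conclusion holds exactly only without the numerical clamping the main text mentions.
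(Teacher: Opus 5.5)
Your proof is correct and follows the paper's own argument. The paper likewise treats each $\lambda=0$ branch probability as a conditional probability under $q_T$ and telescopes the parent masses along the root-to-leaf path, using root mass one, down to $q_T(b\mid x)$; your version additionally spells out the $\sigma(\mathrm{logit}(u))=u$ step, where the positivity assumption is used, and the prefix indexing, all of which the paper leaves implicit.
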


\begin{proof}
For any tree node \(\mathcal{C}_t(a)\), the branch probability \(\pi^0_{t,a}(x)\) is the conditional probability under \(q_T\) that the target leaf lies in the right child, given that it lies in the parent node. The left branch probability is the corresponding conditional probability for the left child. For a leaf \(b\), multiplying these conditional probabilities along its root-to-leaf path gives
\[
  \prod_{t=0}^{D-1}
  q_T\!\left(\mathcal{C}_{t+1}(a_{t+1}(b)) \mid
  \mathcal{C}_t(a_t(b)), x\right)
  =
  q_T(\{b\}\mid x),
\]
because the intermediate parent masses telescope and the root mass is one. This product is \(p_{0,T}(b\mid x)\), so \(p_{0,T}(b\mid x)=q_T(b\mid x)\).
\end{proof}

\begin{proposition}[Optimal contiguous extraction and grid error]
\label{prop:contiguous-grid-error}
For a fixed discrete distribution \(p(b\mid x)\) and mass \(M\), the interval extraction rule in the main text returns a shortest interval among all contiguous leaf intervals whose mass is at least \(M\). Moreover, let \(P\) be any distribution on a bounded target range of length \(R\), and let \(L^*(M)\) be the shortest length of any continuous interval with \(P\)-mass at least \(M\). Let \(L_D^*(M)\) be the shortest length among intervals whose endpoints are restricted to a uniform grid with \(B=2^D\) cells and width \(\Delta=R/2^D\). Then
\[
  L^*(M)\le L_D^*(M)\le L^*(M)+2\Delta .
\]
\end{proposition}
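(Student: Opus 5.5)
The proof splits into two independent parts, matching the two claims of Proposition~\ref{prop:contiguous-grid-error}.

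\textbf{Part 1: exactness of extraction.} I would argue directly from the definition in \eqref{eq:shortest-mass-interval}. The feasible set consists of the pairs \((\ell,u)\) with \(0\le \ell\le u<B\) whose accumulated mass \(\sum_{b=\ell}^{u}p(b\mid x)\) is at least \(M\). This set is finite, with \(O(B^2)\) pairs. It is nonempty when \(M\le 1\), since \((0,B-1)\) carries total mass one by Proposition~\ref{prop:normalized-tree-distribution}. The objective \(e_{u+1}-e_\ell\) is exactly the length of the union of the leaves \(\ell,\ldots,u\). Any contiguous leaf interval is of this form, so a minimizer over this finite set is a shortest contiguous leaf interval with mass at least \(M\). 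If the implementation scans all pairs, or uses a two-pointer sweep exploiting monotonicity of prefix sums, it returns such a minimizer. This part is bookkeeping.

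\textbf{Part 2, lower bound.} I would show \(L^*(M)\le L_D^*(M)\) by inclusion. Every grid-endpoint interval is a continuous interval, so the infimum over the larger class is no larger. I would also note that a minimizer for \(L^*(M)\) exists: either use compactness of endpoints in the bounded range together with upper semicontinuity of \(P([a,b])\) for closed intervals, or simply treat \(L^*\) as an infimum and take a near-optimal interval with slack \(\epsilon\to 0\).

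\textbf{Part 2, upper bound.} I would take an optimal (or \(\epsilon\)-optimal) closed interval \([a,c]\) of length \(L^*(M)\) and \(P\)-mass at least \(M\). I would round outward, setting \(a'=\max\{e_b\le a\}\) and \(c'=\min\{e_b\ge c\}\). Both grid points exist because \([a,c]\) lies in the range \([e_0,e_B]\). Since \([a',c']\supseteq[a,c]\), monotonicity of \(P\) gives mass at least \(M\), so \([a',c']\) is feasible for \(L_D^*(M)\). Each endpoint moves by less than one cell width, at most \(\Delta\), so \(c'-a'\le (c-a)+2\Delta\). Letting \(\epsilon\to0\) yields \(L_D^*(M)\le L^*(M)+2\Delta\), with \(\Delta=R/2^D\) decreasing geometrically in \(D\).

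\textbf{Main obstacle.} The main difficulty is interpretive rather than technical. The paper must make explicit that \(P\) is supported on the grid range \([e_0,e_B]\), which has length \(R\), so that outward rounding stays on the grid. Targets outside the training range are clipped, so \(P\) should be understood as the distribution on the range or the clipped law. I would also handle possible atoms by using closed intervals throughout, which makes containment give the mass bound without boundary issues. Everything else follows from monotonicity of measure and finiteness of the search.
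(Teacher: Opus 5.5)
Your proposal is correct and follows the paper's proof essentially step for step. You use exhaustive enumeration of the finite set of contiguous leaf intervals for the first claim, inclusion for $L^*(M)\le L_D^*(M)$, and outward rounding of a near-optimal interval to the enclosing grid points (each endpoint moving by at most $\Delta$) for the upper bound, and your additions (nonemptiness of the feasible set, the $\max/\min$ form of the rounding, which also covers the trivial boundary cases that the paper's strict-inequality cell choice $e_i\le u<e_{i+1}$, $e_{j-1}<v\le e_j$ skips, and the explicit requirement that $P$ live on $[e_0,e_B]$) only tighten details the paper leaves implicit.
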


\begin{proof}
The first statement follows directly from enumerating the feasible set of contiguous leaf intervals and minimizing their endpoint distance.

For the grid approximation, every grid-aligned interval is also a continuous interval, so \(L^*(M)\le L_D^*(M)\). Conversely, let \([u,v]\) be a continuous interval with \(P([u,v])\ge M\) and length arbitrarily close to \(L^*(M)\). Choose grid points \(e_i\) and \(e_j\) such that \(e_i\le u<e_{i+1}\) and \(e_{j-1}<v\le e_j\). Then \([e_i,e_j]\) is grid-aligned, contains \([u,v]\), and therefore has mass at least \(M\). Its length is at most \((v-u)+2\Delta\). Taking \([u,v]\) arbitrarily close to optimal gives \(L_D^*(M)\le L^*(M)+2\Delta\).
\end{proof}

\begin{proposition}[Finite-grid validation concentration]
\label{prop:validation-concentration}
Condition on a fixed trained model and a fixed finite inference grid \(\mathcal{A}\) of size \(K\). For each candidate \(a\in\mathcal{A}\), let \(c_a=\mathbb{P}\{Y\in I_a(X)\}\) be its population coverage and let \(\hat c_a\) be its empirical coverage on an independent validation set of size \(n_{\mathrm{val}}\). Then, with probability at least \(1-\delta\),
\[
  \max_{a\in\mathcal{A}} |\hat c_a-c_a|
  \le
  \sqrt{\frac{\log(2K/\delta)}{2n_{\mathrm{val}}}} .
\]
\end{proposition}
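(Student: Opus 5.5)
The plan is to combine Hoeffding's inequality for each candidate with a union bound over the finite grid \(\mathcal{A}\). The key structural point is the conditioning. Once the trained model is fixed, and with it the temperature, residual scale, mass threshold, and the leaf boundaries \(e_b\) fitted on the training split, every candidate \(a\in\mathcal{A}\) defines a deterministic map \(x\mapsto I_a(x)\). The candidates in \(\mathcal{A}\) are listed before the validation data are seen. The validation pairs \((X_j,Y_j)\), \(j=1,\ldots,n_{\mathrm{val}}\), are independent of the training data and i.i.d.\ from the population law. For each fixed \(a\), the indicators \(Z_{a,j}=\mathbf{1}\{Y_j\in I_a(X_j)\}\) are therefore i.i.d.\ Bernoulli with mean \(c_a\), and \(\hat c_a=n_{\mathrm{val}}^{-1}\sum_j Z_{a,j}\).

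First, I apply the two-sided Hoeffding inequality to each fixed \(a\). Since \(Z_{a,j}\in[0,1]\), for every \(\varepsilon>0\) we have
\[
  \mathbb{P}\bigl\{|\hat c_a-c_a|>\varepsilon\bigr\}\le 2\exp(-2n_{\mathrm{val}}\varepsilon^2).
\]
Second, I take a union bound over the \(K\) candidates, which gives
\[
  \mathbb{P}\Bigl\{\max_{a\in\mathcal{A}}|\hat c_a-c_a|>\varepsilon\Bigr\}\le 2K\exp(-2n_{\mathrm{val}}\varepsilon^2).
\]
Third, I set the right-hand side equal to \(\delta\). Solving yields \(\varepsilon=\sqrt{\log(2K/\delta)/(2n_{\mathrm{val}})}\), and taking the complement gives the stated bound with probability at least \(1-\delta\).

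The computations above are routine; the main care point is justifying the conditioning. The probability must be taken conditionally on the training split. Conditional on that split, the validation sample keeps its i.i.d.\ law, and each \(I_a\) is a measurable function of \(x\) alone. This is what makes the \(Z_{a,j}\) independent across \(j\) for each fixed \(a\).

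The \(Z_{a,j}\) need not be independent across different candidates \(a\); the union bound does not require that. The bound also covers the data-dependent choice \(\hat a\) made by the selection rule, since \(\hat a\in\mathcal{A}\) and so \(|\hat c_{\hat a}-c_{\hat a}|\) is at most the maximum. It follows that if \(\hat c_{\hat a}\ge\eta\), then \(c_{\hat a}\ge\eta-\varepsilon\) on the same event. I would note this as a remark rather than part of the proof.
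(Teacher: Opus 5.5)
Your proposal is correct and is the same argument as the paper's proof: Hoeffding's inequality for each fixed candidate's Bernoulli coverage indicators, a union bound over the $K$ grid points, and solving $2K\exp(-2n_{\mathrm{val}}\varepsilon^2)=\delta$ for $\varepsilon$. One caveat on your discussion of the conditioning: take the independence from the proposition's hypothesis (a fixed model and an independent validation set) rather than from conditioning on the training split alone, because the paper's pipeline selects the checkpoint by validation loss.
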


\begin{proof}
For a fixed candidate \(a\), the coverage indicators
\(\mathbf{1}\{Y_i\in I_a(X_i)\}\) are bounded Bernoulli variables with mean \(c_a\). Hoeffding's inequality gives
\[
  \mathbb{P}\{|\hat c_a-c_a|>\epsilon\}\le 2\exp(-2n_{\mathrm{val}}\epsilon^2).
\]
Applying a union bound over the \(K\) grid candidates and setting the right-hand side to \(\delta\) yields the stated bound.
\end{proof}

\section{PATH Algorithm}
\label{app:path-algorithm}

Algorithm~\ref{alg:path} summarizes next-interval prediction in the complete training and inference pipeline. Each operation points to the corresponding equation in the main text.

\begin{algorithm}[t]
\caption{Pseudocode for \method. Each operation is linked to the corresponding equation in the main text.}
\label{alg:path}
\begin{algorithmic}[1]
\State Fit uniform target boundaries \(e_b\) on the training split using Eq.~\eqref{eq:leaf-boundaries}.
\State Encode each target as a clipped leaf index \(z_i\) using Eq.~\eqref{eq:leaf-index}.
\State Map each tree prefix to its target region \(\mathcal{C}_t(a)\) using Eq.~\eqref{eq:tree-cell}.
\State Predict the base leaf distribution \(q_T(b\mid x)\) using Eq.~\eqref{eq:base-leaf-distribution}.
\State Convert base leaf mass into branch probabilities \(\pi^0_{t,a}(x)\) using Eq.~\eqref{eq:base-branch-probability}.
\State Refine branch probabilities with residual autoregression using Eq.~\eqref{eq:residual-branch-probability}.
\State Enumerate root-to-leaf paths to obtain \(p_{\lambda,T}(b\mid x)\) using Eq.~\eqref{eq:refined-leaf-distribution}.
\State Train the direct and refined distributions with Eqs.~\eqref{eq:leaf-loss}--\eqref{eq:training-objective}.
\State Extract the shortest contiguous mass interval using Eq.~\eqref{eq:shortest-mass-interval}.
\State Select \(T\), \(M\), and \(\lambda\) on validation by evaluating intervals from Eq.~\eqref{eq:shortest-mass-interval}.
\end{algorithmic}
\end{algorithm}

Algorithm~\ref{alg:path} is the procedural counterpart of the method section. It separates the construction into target encoding, distribution prediction, autoregressive refinement, training, interval extraction, and validation selection, making clear that \method returns an interval only after the distributional model and the validation-selected inference parameters have been fixed.

\section{Evaluation Protocol}
\label{app:evaluation-protocol}

\paragraph{Data splits.}
The main comparison uses random seeds \(0,\ldots,9\), each defining an independent 60/20/20 train/validation/test split. The training split fits preprocessing and target bin edges, the validation split selects checkpoints or inference parameters, and the fixed choices are then evaluated on the test split. The model-design ablation uses the same 10 random seeds; depth/target ablations and diagnostic sections use the representative split unless stated otherwise.

\paragraph{Computing environment.}
Experiments were executed on 64-bit Linux cluster nodes (kernel 5.15). GPU workloads used NVIDIA A800 or H800 accelerators with 80\,GB memory, while CPU-only baselines ran on Intel Xeon Platinum-class processors; a representative CPU node contained two Intel Xeon Platinum 8468 processors with 96 physical cores in total. The primary software environment used Python 3.10.18, PyTorch 2.6.0 with CUDA 11.8, NumPy 2.1.2, pandas 2.3.0, scikit-learn 1.7.0, SciPy 1.15.3, CatBoost 1.2.10, XGBoost 2.1.4, LightGBM 3.3.5, Matplotlib 3.10.3, and Seaborn 0.13.2. The anonymous code package provides the complete Python dependency list in \texttt{requirements.txt}.

\paragraph{OpenML dataset manifest.}
Table~\ref{tab:openml-dataset-manifest} lists the exact OpenML data identifiers and target columns used in the 56-dataset benchmark. Dataset names follow the aliases used throughout the paper, while the data IDs uniquely identify the corresponding OpenML sources.

\begingroup
\footnotesize
\setlength{\tabcolsep}{3.0pt}
\renewcommand{\arraystretch}{1.05}
\begin{longtable}{@{}L{0.36\textwidth}C{0.08\textwidth}L{0.30\textwidth}R{0.10\textwidth}C{0.06\textwidth}@{}}
\caption{OpenML dataset manifest. Rows and features refer to the full dataset before the split protocol is applied.}
\label{tab:openml-dataset-manifest}\\
\toprule
Dataset & ID & Target & Rows & Feat. \\
\midrule
\endfirsthead
\multicolumn{5}{c}{\footnotesize Table~\thetable: OpenML manifest, continued.}\\
\toprule
Dataset & ID & Target & Rows & Feat. \\
\midrule
\endhead
\midrule
\multicolumn{5}{r}{\footnotesize Continued on next page}\\
\endfoot
\bottomrule
\endlastfoot
Allstate\_Claims\_Severity & 42571 & loss & 188,318 & 130 \\
BNG(lowbwt) & 1193 & class & 31,104 & 9 \\
Bike\_Sharing\_Demand & 44000 & count & 17,379 & 6 \\
Brazilian\_houses & 42688 & total\_(BRL) & 10,692 & 12 \\
CPMP-2015-regression & 41700 & runtime & 2,108 & 26 \\
CPS1988 & 43963 & wage & 28,155 & 6 \\
Job\_Profitability & 44311 & Jobs\_Gross\_Margin\_Percentage & 14,480 & 29 \\
Meta\_Album\_MD\_5\_BIS\_Mini & 44296 & CATEGORY & 28,240 & 38 \\
Meta\_Album\_MD\_6\_Mini & 44310 & CATEGORY & 28,120 & 46 \\
Meta\_Album\_MD\_MIX\_Mini & 44287 & CATEGORY & 28,240 & 68 \\
Minneapolis-Air-Quality-Survey & 43747 & Results & 4,790 & 17 \\
Moneyball & 41021 & RS & 1,232 & 14 \\
NASA\_PHM2008 & 42821 & class & 45,918 & 21 \\
OnlineNewsPopularity & 4545 & shares & 39,644 & 60 \\
Titanic & 41265 & Fare & 1,307 & 7 \\
aids2 & 46158 & time & 2,814 & 5 \\
appliances\_energy\_prediction & 46283 & Appliances & 19,735 & 27 \\
arsenic-female-lung & 513 & events & 559 & 4 \\
arsenic-male-bladder & 482 & events & 559 & 4 \\
arsenic-male-lung & 536 & events & 559 & 4 \\
beijing\_pm & 46285 & pm2.5 & 41,757 & 11 \\
bengaluru\_real\_estate\_price & 46284 & price & 13,320 & 7 \\
black\_friday & 41540 & Purchase & 166,821 & 9 \\
cat\_adoption & 47176 & time & 2,257 & 19 \\
check\_times & 47177 & time & 13,626 & 23 \\
cholesterol & 204 & chol & 303 & 13 \\
chscase\_vine2 & 689 & col\_3 & 468 & 2 \\
climate\_change\_impact\_on\_\newline agriculture\_2024 & 46726 & Economic\_Impact\_\newline Million\_USD & 10,000 & 14 \\
colrec & 46145 & time & 5,578 & 6 \\
cpu & 561 & class & 209 & 7 \\
delays\_zurich\_transport & 42495 & delay & 27,327 & 17 \\
diamonds & 42225 & price & 53,940 & 9 \\
flchain & 46161 & time & 4,000 & 8 \\
forest\_fires & 42363 & area & 517 & 12 \\
fps\_benchmark & 44992 & FPS & 24,624 & 43 \\
grace & 46168 & time & 1,000 & 7 \\
hdfail & 46167 & time & 52,410 & 6 \\
health\_insurance & 44993 & whrswk & 22,272 & 11 \\
house\_16H & 574 & price & 22,784 & 16 \\
house\_8L & 218 & price & 22,784 & 8 \\
houses & 537 & median\_house\_value & 20,640 & 8 \\
kings\_county & 44989 & price & 21,613 & 21 \\
laser & 42364 & Output & 993 & 4 \\
meta & 566 & class & 528 & 21 \\
metabric & 46142 & time & 1,903 & 10 \\
munich-rent-index-1999 & 46772 & rent & 3,082 & 8 \\
nafld1 & 46164 & time & 12,588 & 6 \\
pol & 201 & foo & 15,000 & 48 \\
rainfall\_bangladesh & 41539 & Rainfall & 16,755 & 3 \\
sarcos & 44976 & V22 & 48,933 & 21 \\
seoul\_bike\_sharing\_demand & 46297 & 13 & 8,760 & 13 \\
seoul\_bike\_sharing\_demand\_cat & 46328 & rented\_bike\_count & 8,760 & 17 \\
socmob & 541 & counts\_for\_sons\_\newline current\_occupation & 1,156 & 5 \\
stock\_fardamento02 & 42545 & qts & 6,277 & 6 \\
video\_transcoding & 44974 & utime & 68,784 & 18 \\
visualizing\_galaxy & 690 & velocity & 323 & 4 \\
\end{longtable}
\endgroup

The manifest contains 56 unique OpenML data IDs and spans 1,216,045 observations. Dataset sizes range from 209 to 188,318 rows and from 2 to 130 input features. Reporting the identifiers and target columns fixes the benchmark source independently of local dataset naming.

\paragraph{Main \method setting.}
The main setting is \(D=8\) with validation selection target \(\eta=0.905\). The checkpoint is selected by validation loss. The inference grid is the mass, temperature, and autoregressive residual strength grid stated in the main text. Validation selects \(T\), \(M\), and \(\lambda\), and the resulting shortest interval satisfying the selected mass is evaluated directly on the test split.

\paragraph{Ranking rule.}
For each dataset, methods are ranked by the following deterministic rule:
\begin{enumerate}
  \item Methods with empirical test coverage at least \(0.885\) are valid under the coverage rule.
  \item Valid methods are ranked by average interval length normalized by the training target range, with shorter intervals preferred.
  \item Methods with coverage below \(0.885\) are ranked after all valid methods by coverage error to the valid threshold and then length.
  \item Wins count first-place finishes under this rule for the relevant evaluation.
\end{enumerate}

\paragraph{Reporting boundary.}
The depth and validation-target ablation characterizes target resolution and coverage operating points. The main text uses \(D=8,\eta=0.905\) as the compact high-resolution setting and presents larger validation targets as more conservative alternatives. Each experiment fixes depth and validation target before test evaluation and applies the same setting to every dataset.

\section{Aggregate Diagnostics}
\label{app:aggregate-diagnostics}

This section reports the aggregate metrics omitted from the main tables. All values average \benchmark over 10 random seeds unless stated otherwise. The representative-split distributional boxplots appear in Figure~\ref{fig:full-distribution-diagnostics} of the main paper.

\begin{figure}
\centering
\includegraphics[width=\linewidth]{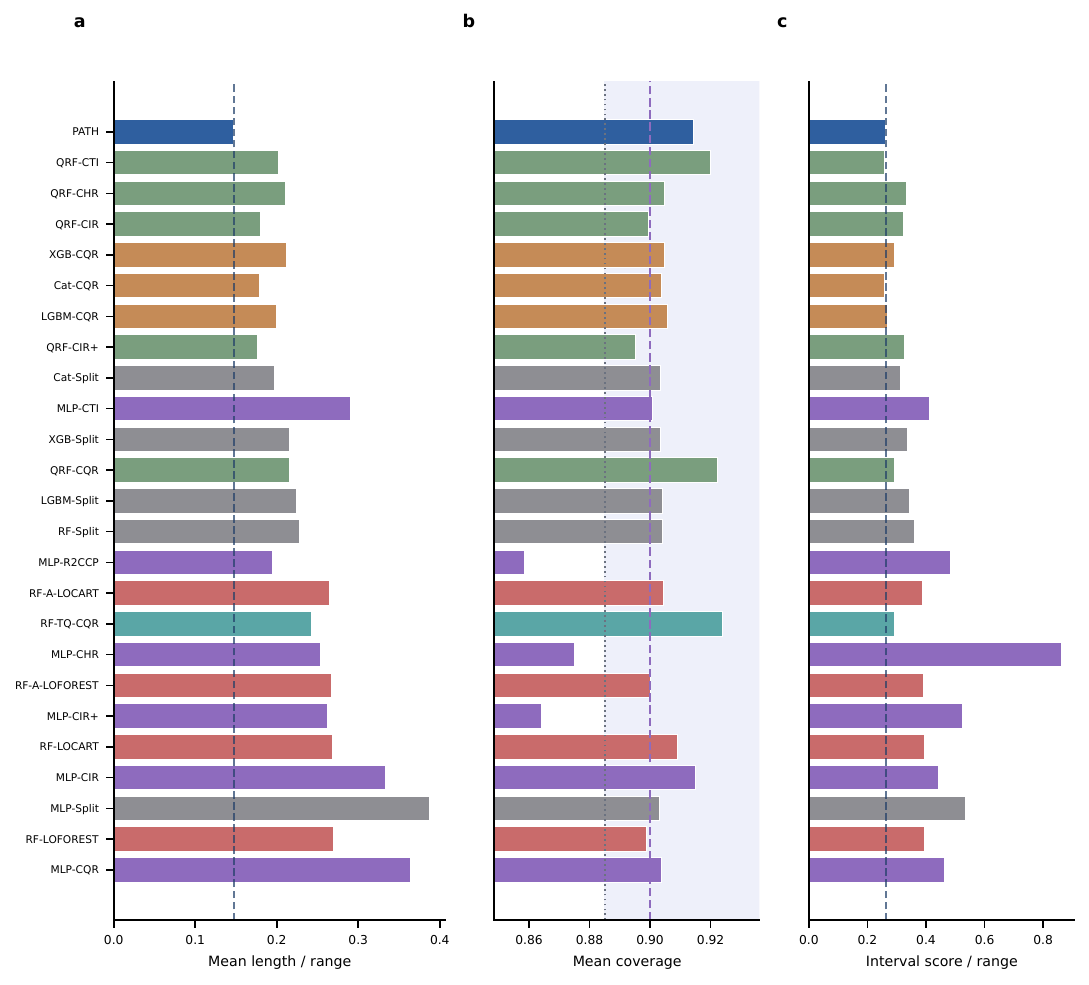}
\caption{Aggregate main metric summary over all 25 concrete methods and 10 random seeds, ordered as in Table~\ref{tab:cross-dataset-summary}. (a) Mean interval length normalized by the training target range. (b) Mean empirical coverage, with the nominal 0.9 target and the coverage threshold marked. (c) Mean interval score normalized by the training target range. Lower is better in panels (a) and (c).}
\label{fig:main-results}
\end{figure}

Figure~\ref{fig:main-results} compares length, coverage, and interval score for all 25 methods under the 10-split protocol. \method attains the shortest mean normalized length, \(0.1473\), while maintaining mean coverage \(0.9144\). The next shortest method has mean length \(0.1760\), so \method reduces normalized length by \(16.3\%\) relative to this closest length competitor. Several baselines achieve greater mean coverage, but only by returning substantially longer intervals. The three panels therefore show that \method occupies the most compact region without falling below the nominal mean coverage target.

\begin{table}
\centering
\caption{Supplementary interval-score summary for the 10-split main comparison. Score/rng is the mean interval score normalized by the training target range. Rank and wins follow the coverage-constrained length rule in Table~\ref{tab:cross-dataset-summary}.}
\label{tab:interval-score-summary-current}
\footnotesize
\setlength{\tabcolsep}{3.4pt}
\begin{tabular*}{0.92\linewidth}{@{\extracolsep{\fill}}lrrrrr@{}}
\toprule
Method & mean cov & mean len/rng \(\downarrow\) & score/rng \(\downarrow\) & rank \(\downarrow\) & wins \(\uparrow\) \\
\midrule
\textbf{\method} & 0.9144 & 0.1473 & 0.2627 & 5.25 & 163 \\
QRF-CTI & 0.9201 & 0.2017 & 0.2578 & 8.38 & 130 \\
QRF-CHR & 0.9046 & 0.2110 & 0.3339 & 9.03 & 12 \\
QRF-CIR & 0.8995 & 0.1800 & 0.3243 & 9.45 & 9 \\
XGBoost-CQR & 0.9047 & 0.2120 & 0.2936 & 9.96 & 6 \\
CatBoost-CQR & 0.9038 & 0.1786 & 0.2571 & 10.03 & 9 \\
LightGBM-CQR & 0.9059 & 0.2002 & 0.2698 & 10.29 & 7 \\
QRF-CIR+ & 0.8951 & 0.1760 & 0.3264 & 10.73 & 32 \\
CatBoost-SplitCP & 0.9034 & 0.1967 & 0.3116 & 11.94 & 26 \\
MLP-CTI & 0.9009 & 0.2900 & 0.4135 & 12.67 & 49 \\
XGBoost-SplitCP & 0.9035 & 0.2162 & 0.3363 & 12.77 & 27 \\
QRF-CQR & 0.9222 & 0.2159 & 0.2914 & 12.99 & 0 \\
LightGBM-SplitCP & 0.9042 & 0.2242 & 0.3452 & 13.91 & 14 \\
RF-SplitCP & 0.9043 & 0.2277 & 0.3605 & 14.08 & 2 \\
MLP-R2CCP & 0.8586 & 0.1953 & 0.4833 & 14.51 & 40 \\
RF-A-LOCART & 0.9043 & 0.2649 & 0.3894 & 14.65 & 4 \\
RF-TQ-CQR & 0.9241 & 0.2431 & 0.2911 & 14.83 & 2 \\
MLP-CHR & 0.8750 & 0.2533 & 0.8616 & 14.92 & 12 \\
RF-A-LOFOREST & 0.9001 & 0.2668 & 0.3901 & 15.32 & 0 \\
MLP-CIR+ & 0.8643 & 0.2626 & 0.5237 & 15.48 & 12 \\
RF-LOCART & 0.9090 & 0.2682 & 0.3944 & 15.78 & 0 \\
MLP-CIR & 0.9149 & 0.3330 & 0.4430 & 15.90 & 1 \\
MLP-SplitCP & 0.9031 & 0.3869 & 0.5340 & 16.24 & 2 \\
RF-LOFOREST & 0.8988 & 0.2700 & 0.3962 & 16.80 & 0 \\
MLP-CQR & 0.9037 & 0.3646 & 0.4644 & 19.09 & 1 \\
\bottomrule
\end{tabular*}
\end{table}

Table~\ref{tab:interval-score-summary-current} places interval score beside coverage-constrained interval efficiency. CatBoost-CQR and QRF-CTI obtain the two lowest score/rng values, \(0.2571\) and \(0.2578\). The \method score, \(0.2627\), is within \(2.2\%\) of the best value, while its intervals are \(17.5\%\) shorter than CatBoost-CQR and \(27.0\%\) shorter than QRF-CTI in aggregate. \method also retains the best rank and largest win count. Thus, its large compactness gain is achieved while remaining close to the strongest interval-score methods.

\section{Additional Diagnostics}
\label{app:additional-mined-diagnostics}

This section expands the main experiments through numerical depth and validation-target results, validation-selection diagnostics, scalar-margin compatibility, pairwise robustness tests, conditional coverage analysis, and runtime comparisons. Unless explicitly identified as a 10-split aggregate, results use the representative split.

\subsection{Depth and Validation-Target Results}
\label{app:depth-target-details}

\begin{table}
\centering
\caption{Representative-split depth and validation-target ablation corresponding to Figure~\ref{fig:depth-ablation}. Rank follows the coverage-constrained length rule; lower is better.}
\label{tab:depth-target-ablation}
\footnotesize
\setlength{\tabcolsep}{3.4pt}
\begin{tabular*}{0.94\linewidth}{@{\extracolsep{\fill}}rrrrrrr@{}}
\toprule
target & \(D\) & mean cov & valid & len/rng \(\downarrow\) & rank \(\downarrow\) & score/rng \(\downarrow\) \\
\midrule
0.905 & 4 & 0.9278 & 52/56 & 0.1687 & 11.46 & 0.3047 \\
0.905 & 5 & 0.9239 & 52/56 & 0.1557 & 9.27 & 0.2799 \\
0.905 & 6 & 0.9178 & 53/56 & 0.1512 & 6.59 & 0.2754 \\
0.905 & 7 & 0.9149 & 52/56 & 0.1506 & 5.75 & 0.2843 \\
0.905 & 8 & 0.9141 & 51/56 & \textbf{0.1467} & 5.86 & 0.2647 \\
0.910 & 4 & 0.9307 & 54/56 & 0.1702 & 11.13 & 0.3021 \\
0.910 & 5 & 0.9274 & 53/56 & 0.1586 & 9.20 & 0.2781 \\
0.910 & 6 & 0.9220 & 53/56 & 0.1539 & 7.30 & 0.2762 \\
0.910 & 7 & 0.9176 & 52/56 & 0.1536 & 6.43 & 0.2806 \\
0.910 & 8 & 0.9201 & 53/56 & 0.1505 & \textbf{5.61} & 0.2626 \\
0.915 & 4 & 0.9332 & 54/56 & 0.1735 & 11.71 & 0.2973 \\
0.915 & 5 & 0.9325 & 56/56 & 0.1620 & 9.05 & 0.2759 \\
0.915 & 6 & 0.9267 & 53/56 & 0.1569 & 7.61 & 0.2741 \\
0.915 & 7 & 0.9229 & 52/56 & 0.1567 & 7.32 & 0.2783 \\
0.915 & 8 & 0.9252 & 53/56 & 0.1527 & 6.54 & 0.2593 \\
\bottomrule
\end{tabular*}
\end{table}

Table~\ref{tab:depth-target-ablation} provides the numerical values underlying Figure~\ref{fig:depth-ablation}. Increasing depth from \(D=4\) to \(D=8\) reduces mean normalized length by \(11.6\%\)--\(13.0\%\) across the three validation targets. Mean rank improves at the same time, while every setting retains mean coverage above \(0.914\). The shortest configuration is \(D=8,\eta=0.905\), whereas \(D=8,\eta=0.910\) gives the best mean rank with only a \(2.6\%\) length increase. The consistent trend establishes that finer interval hierarchies improve output resolution rather than benefiting from a single target choice.

\begin{table}
\centering
\caption{Representative-split tradeoff for the main \(D=8\) setting. Increasing the validation selection target gives a more conservative operating point and slightly lengthens intervals.}
\label{tab:setting-tradeoff-current}
\footnotesize
\setlength{\tabcolsep}{4.0pt}
\begin{tabular*}{0.94\linewidth}{@{\extracolsep{\fill}}rrrrrrrr@{}}
\toprule
target & mean cov & median cov & valid & strict & len/rng \(\downarrow\) & score/rng \(\downarrow\) & rank \(\downarrow\) \\
\midrule
0.905 & 0.9141 & 0.9083 & 51/56 & 44/56 & \textbf{0.1467} & 0.2647 & 5.86 \\
0.910 & 0.9201 & 0.9139 & 53/56 & 48/56 & 0.1505 & 0.2626 & \textbf{5.61} \\
0.915 & 0.9252 & 0.9171 & 53/56 & 50/56 & 0.1527 & 0.2593 & 6.54 \\
\bottomrule
\end{tabular*}
\end{table}

Table~\ref{tab:setting-tradeoff-current} isolates validation-target selection at \(D=8\). Raising \(\eta\) from \(0.905\) to \(0.915\) increases mean coverage from \(0.9141\) to \(0.9252\) and strict-valid datasets from 44/56 to 50/56. Mean length/range increases by only \(4.1\%\), from \(0.1467\) to \(0.1527\). The main \(\eta=0.905\) setting therefore provides the most compact operating point, while the higher targets offer predictable coverage increases at modest length cost.

\subsection{Validation Selection Diagnostics}
\label{app:validation-selection}

\begin{figure}
\centering
\includegraphics[width=\linewidth]{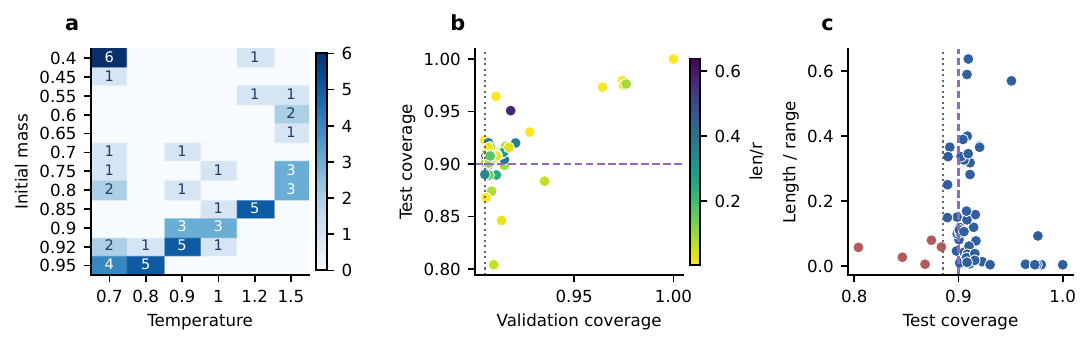}
\caption{Representative-split validation selection diagnostics for the main \(D=8,\eta=0.905\) setting. (a) Counts of initial mass and temperature pairs selected on validation data across datasets. (b) Validation coverage versus test coverage, colored by normalized test length. (c) Test coverage versus normalized interval length; blue points satisfy the coverage threshold.}
\label{fig:calibration-data-analysis}
\end{figure}

Figure~\ref{fig:calibration-data-analysis} examines how validation selection adapts the interval extractor across datasets. Panel (a) shows that selected mass and temperature values span the candidate grid rather than collapsing to one global setting. Panel (b) shows a clear validation-to-test relationship, although several datasets exhibit transfer gaps. Panel (c) places these gaps against interval length; 51/56 datasets satisfy the paper's coverage threshold at the selected \(D=8,\eta=0.905\) operating point. The result shows that dataset-specific selection is actively used and generally transfers compact intervals to the test split.

\subsection{Calibration-expanded PATH}
\label{app:calibration-expanded-path}

\method can also be combined with a final scalar expansion after interval construction. This supplementary experiment uses the same model architecture, depth \(D=8\), training schedule, random seeds, and inference grid over \(M\), \(T\), and \(\lambda\) as the main \method run. Its validation-selection rule applies scalar expansion to every candidate. For each validation candidate, \method first forms the raw shortest-mass interval \(I_0(x)=[L_0(x),U_0(x)]\) by Eq.~\eqref{eq:shortest-mass-interval}. It then computes the validation scores
\[
  s_i=\max\{L_0(x_i)-y_i,\; y_i-U_0(x_i),\;0\},
\]
sets \(q\) to the higher empirical quantile at level \(\lceil(n_{\mathrm{val}}+1)0.90\rceil/n_{\mathrm{val}}\), and evaluates the expanded interval
\[
  I_q(x)=[L_0(x)-q,\;U_0(x)+q].
\]
Candidates whose validation coverage is within \(0.015\) of the \(0.90\) target are preferred, and among them we choose the shortest validation interval; if no candidate is within tolerance, we choose the candidate with the smallest coverage error and use length to break ties. The selected \(M\), \(T\), \(\lambda\), and \(q\) are then applied to the test split. This gives a two-stage interval construction: \method determines the data-dependent interval shape through the learned target distribution, and scalar expansion adds a validation-estimated margin around that interval. The experiment evaluates this scalar margin on top of the same learned distribution, architecture, and training configuration.

\begin{table}
\centering
\caption{Supplementary comparison between the main \method setting and a calibration-expanded variant on \benchmark over 10 random seeds. Valid counts require coverage at least \(0.885\); strict counts require coverage at least \(0.900\).}
\label{tab:calibration-expanded-path}
\footnotesize
\setlength{\tabcolsep}{4.0pt}
\begin{tabular*}{0.96\linewidth}{@{\extracolsep{\fill}}lrrrrrr@{}}
\toprule
Method & mean cov & valid & strict & mean len/rng \(\downarrow\) & med len/rng \(\downarrow\) & score/rng \(\downarrow\) \\
\midrule
\method & 0.9144 & 515/560 & 429/560 & 0.1473 & \textbf{0.0547} & 0.2627 \\
Calibration-expanded \method & 0.9075 & 493/560 & 282/560 & \textbf{0.1404} & 0.0585 & 0.2825 \\
\bottomrule
\end{tabular*}
\end{table}

Table~\ref{tab:calibration-expanded-path} tests a scalar margin on top of the learned interval shape. Joint selection lowers mean length/range from \(0.1473\) to \(0.1404\), but mean coverage falls from \(0.9144\) to \(0.9075\). The direct \method interval also yields 22 more valid evaluations, 147 more strict-valid evaluations, and a lower score/rng (\(0.2627\) versus \(0.2825\)). These results favor direct distribution-based extraction as the main design: the learned interval shape provides substantially stronger coverage transfer and overall interval quality than joint scalar-margin selection.

Table~\ref{tab:original-depth-sweep-reference-current} reports an additional representative-split depth sweep at validation target \(0.900\) using the initial selection grid. This experiment tests whether the depth trend depends on the main \(0.905\) target. Its \(D=8\) setting achieves mean length/range \(0.1451\) with 49/56 valid datasets, while the main \(D=8,\eta=0.905\) setting obtains \(0.1467\) with 51/56 valid datasets.

\begin{table}
\centering
\caption{Additional representative-split depth sweep at validation target \(0.900\) using the initial selection grid.}
\label{tab:original-depth-sweep-reference-current}
\footnotesize
\setlength{\tabcolsep}{4.0pt}
\begin{tabular*}{0.92\linewidth}{@{\extracolsep{\fill}}rrrrrrr@{}}
\toprule
\(D\) & mean cov & median cov & valid & strict & len/rng \(\downarrow\) & score/rng \(\downarrow\) \\
\midrule
4 & 0.9341 & 0.9143 & 54/56 & 45/56 & 0.1708 & 0.2992 \\
5 & 0.9245 & 0.9124 & 51/56 & 40/56 & 0.1551 & 0.2770 \\
6 & 0.9219 & 0.9100 & 51/56 & 43/56 & 0.1520 & 0.2781 \\
7 & 0.9205 & 0.9089 & 51/56 & 39/56 & 0.1502 & 0.2819 \\
8 & 0.9161 & 0.9059 & 49/56 & 39/56 & 0.1451 & 0.2636 \\
\bottomrule
\end{tabular*}
\end{table}

Table~\ref{tab:original-depth-sweep-reference-current} reproduces the depth pattern at validation target \(0.900\). Increasing \(D\) from 4 to 8 reduces mean length/range by \(15.0\%\), from \(0.1708\) to \(0.1451\), while mean coverage remains \(0.9161\) at \(D=8\). Relative to this target, the main \(0.905\) setting recovers two additional valid datasets for only a \(1.1\%\) length increase. The result confirms stable depth scaling and supports the selected validation target.

\subsection{Pairwise Baseline Comparisons}
\label{app:pairwise-baseline-comparisons}

We compare \method directly with every individual baseline on the representative split to measure dataset-level rank, length, and interval-score differences. These analyses use the same frozen test results and coverage-based ranking rule as the main table.

\begin{table}
\centering
\caption{Pairwise comparison between \method and individual baselines on the representative split of \benchmark. Rank and score counts are reported out of 56 datasets, while shorter counts are reported over the datasets where both methods satisfy the coverage threshold. Length ratio is the mean \method interval length divided by the mean baseline interval length on the same both-valid subset.}
\label{tab:pairwise-dominance-current}
\footnotesize
\setlength{\tabcolsep}{3.0pt}
\begin{tabular*}{0.98\linewidth}{@{\extracolsep{\fill}}lrrrrrr@{}}
\toprule
Baseline & rank better \(\uparrow\) & rank worse \(\downarrow\) & both valid & shorter \(\uparrow\) & len ratio \(\downarrow\) & score better \(\uparrow\) \\
\midrule
MLP-SplitCP & 50/56 & 6/56 & 49/56 & 47/49 & 0.36 & 55/56 \\
MLP-R2CCP & 50/56 & 6/56 & 44/56 & 40/44 & 0.73 & 54/56 \\
MLP-CQR & 49/56 & 7/56 & 50/56 & 48/50 & 0.39 & 44/56 \\
RF-TQ-CQR & 49/56 & 7/56 & 50/56 & 48/50 & 0.60 & 28/56 \\
RF-SplitCP & 48/56 & 8/56 & 49/56 & 45/49 & 0.66 & 54/56 \\
RF-A-LOCART & 47/56 & 9/56 & 50/56 & 45/50 & 0.57 & 50/56 \\
RF-LOCART & 47/56 & 9/56 & 50/56 & 45/50 & 0.56 & 51/56 \\
RF-A-LOFOREST & 47/56 & 9/56 & 49/56 & 44/49 & 0.56 & 51/56 \\
RF-LOFOREST & 47/56 & 9/56 & 48/56 & 43/48 & 0.56 & 51/56 \\
MLP-CIR & 47/56 & 9/56 & 46/56 & 42/46 & 0.47 & 45/56 \\
MLP-CIR+ & 47/56 & 9/56 & 42/56 & 38/42 & 0.60 & 46/56 \\
LightGBM-CQR & 46/56 & 10/56 & 50/56 & 45/50 & 0.72 & 27/56 \\
QRF-CQR & 46/56 & 10/56 & 51/56 & 45/51 & 0.69 & 33/56 \\
MLP-CHR & 46/56 & 10/56 & 44/56 & 39/44 & 0.55 & 46/56 \\
CatBoost-CQR & 45/56 & 11/56 & 50/56 & 43/50 & 0.80 & 33/56 \\
XGBoost-CQR & 44/56 & 12/56 & 49/56 & 42/49 & 0.67 & 33/56 \\
XGBoost-SplitCP & 44/56 & 12/56 & 48/56 & 41/48 & 0.64 & 54/56 \\
LightGBM-SplitCP & 43/56 & 13/56 & 48/56 & 39/48 & 0.68 & 54/56 \\
CatBoost-SplitCP & 42/56 & 14/56 & 48/56 & 39/48 & 0.74 & 51/56 \\
QRF-CHR & 40/56 & 16/56 & 50/56 & 38/50 & 0.67 & 39/56 \\
QRF-CIR & 39/56 & 17/56 & 48/56 & 35/48 & 0.79 & 40/56 \\
QRF-CIR+ & 39/56 & 17/56 & 47/56 & 34/47 & 0.80 & 40/56 \\
MLP-CTI & 38/56 & 18/56 & 50/56 & 35/50 & 0.50 & 44/56 \\
QRF-CTI & 32/56 & 24/56 & 50/56 & 31/50 & 0.69 & 22/56 \\
\bottomrule
\end{tabular*}
\end{table}

Table~\ref{tab:pairwise-dominance-current} gives the numerical pairwise view behind the aggregate ranking. Across all 24 baselines, the mean valid-pair length ratio ranges from \(0.36\) to \(0.80\), and \method is shorter on a majority of jointly valid datasets in every comparison. It also ranks better on 32--50 of 56 datasets. QRF-CTI remains the closest competitor, yet \method is still shorter on 31/50 jointly valid datasets with mean length ratio \(0.69\). The pairwise results show that the aggregate advantage is not tied to one baseline family.

\begin{figure}
\centering
\includegraphics[width=\linewidth]{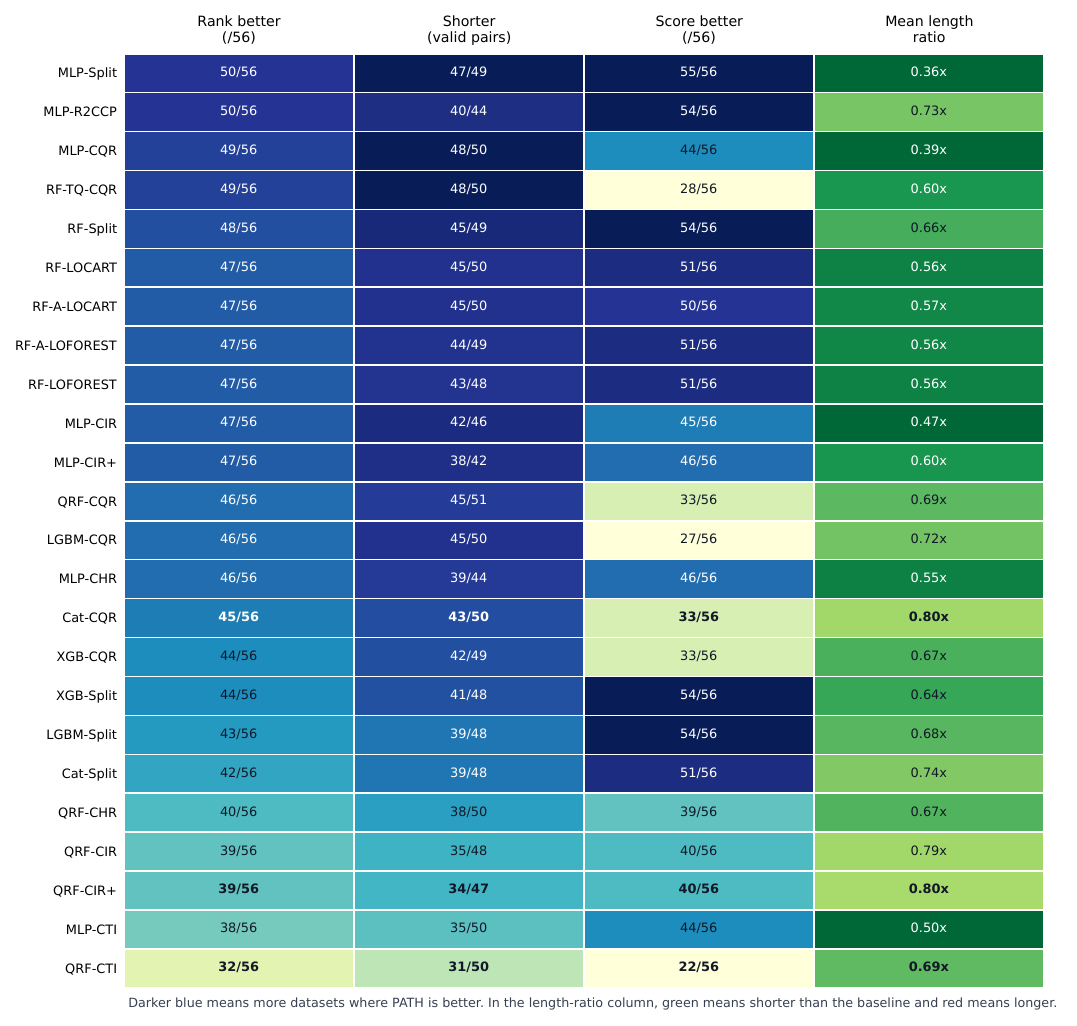}
\caption{Pairwise comparison heatmap between \method and each individual baseline on the representative split. Rank better counts datasets where \method has a better rank under the coverage rule; shorter counts datasets where both methods satisfy the coverage threshold and \method has shorter intervals; score better counts datasets with lower interval score; mean length ratio is the mean \method length divided by the mean baseline length on valid pairs, where values below one indicate shorter \method intervals.}
\label{fig:pairwise-dominance-current}
\end{figure}

Figure~\ref{fig:pairwise-dominance-current} visualizes the same comparisons across rank, valid-pair length, interval score, and mean length ratio. The broad concentration of favorable rank and length cells shows that \method's advantage extends across boosted trees, QRF variants, local random-forest methods, and neural baselines. Interval-score comparisons are more mixed against QRF-CTI, consistent with the aggregate score table, but the length ratio remains favorable for every baseline. The figure therefore localizes the main gain to robust coverage-constrained compactness.

\begin{figure}
\centering
\includegraphics[width=\linewidth]{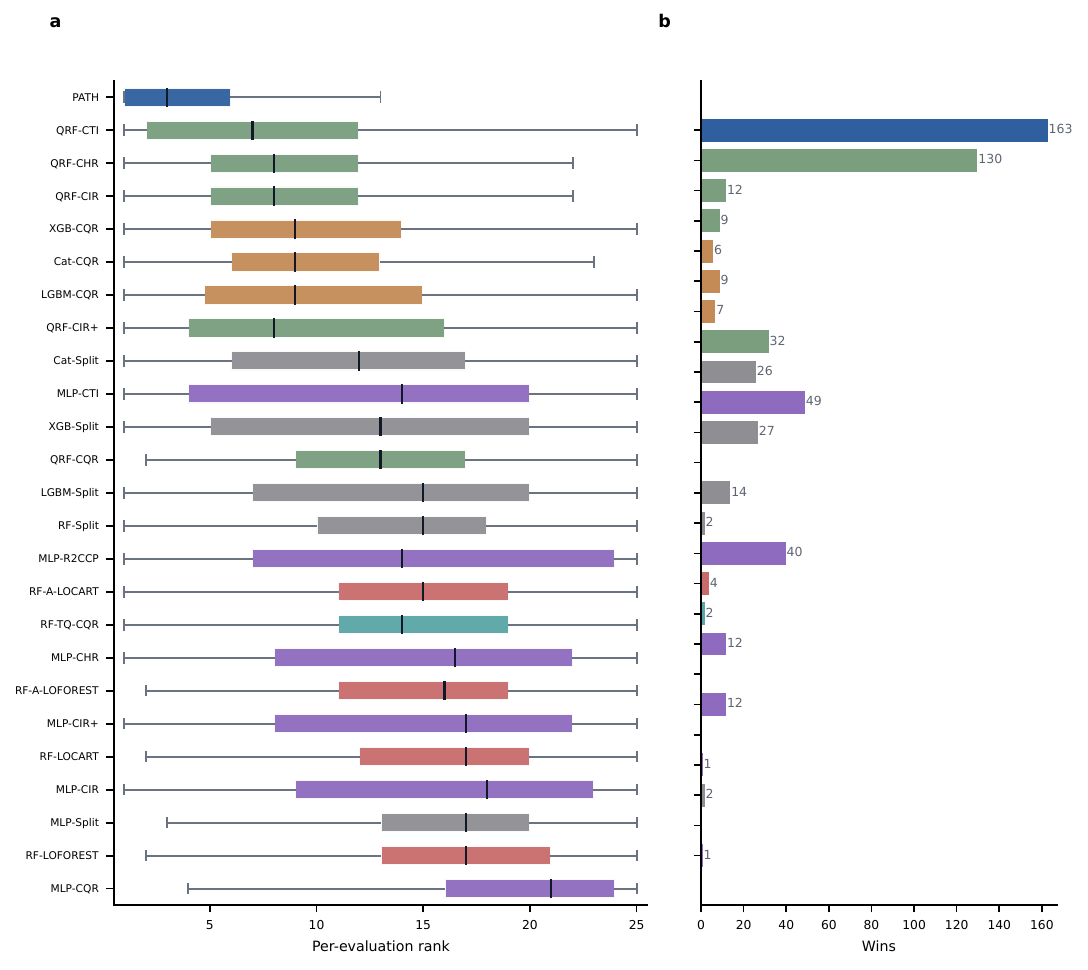}
\caption{Rank distribution for all 25 compared methods over 10 random seeds. (a) Boxplots of ranks across the 560 evaluations; lower is better. (b) Wins for the same methods.}
\label{fig:rank-profile-current}
\end{figure}

Figure~\ref{fig:rank-profile-current} shows the complete evaluation-level rank distribution rather than only its mean. The \method distribution is concentrated toward lower ranks and accompanies the largest win count, 163. Its mean rank of \(5.25\) improves on QRF-CTI's \(8.38\), while the remaining methods rank lower still. The distribution confirms that \method's aggregate lead reflects repeated high placement across datasets and splits.

\begin{figure}
\centering
\includegraphics[width=\linewidth]{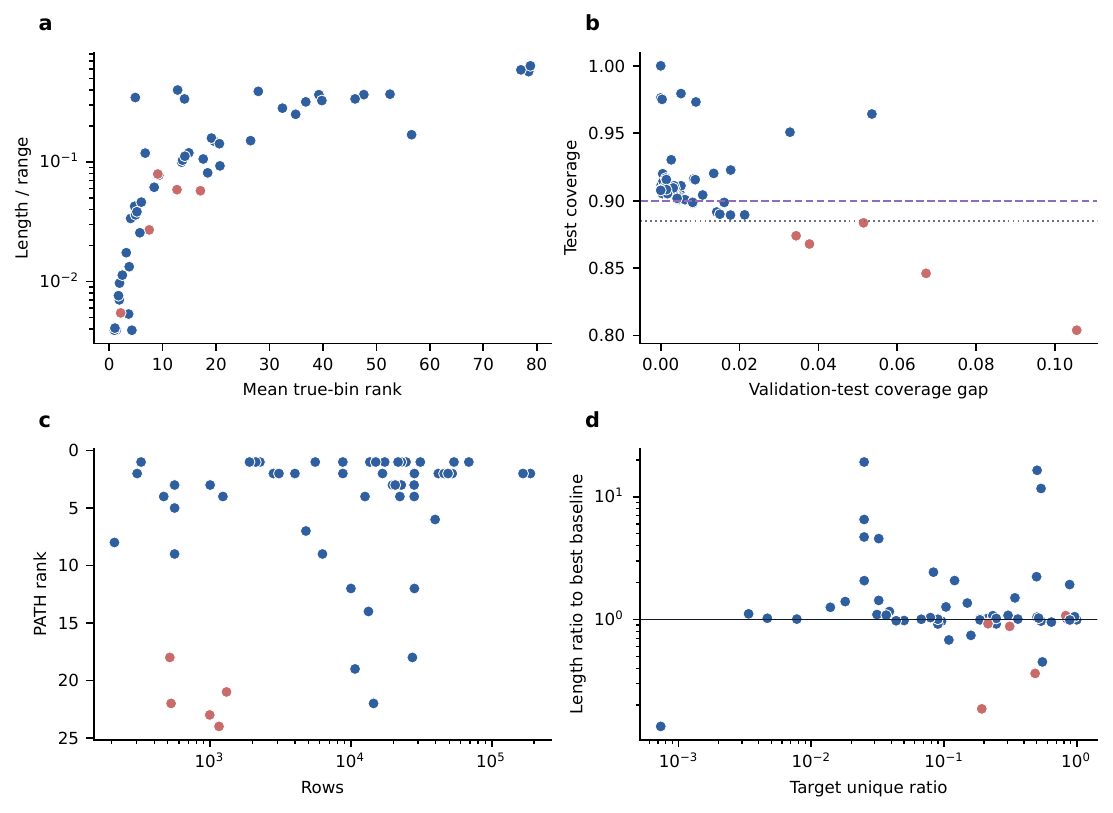}
\caption{Representative-split dataset factor diagnostics for the main \method setting. Colors distinguish datasets by the paper's coverage threshold. (a) Direct leaf ranking quality versus interval length. (b) Gap between validation and test coverage versus test coverage. (c) Sample size versus rank on each dataset. (d) Ratio of unique target values versus length ratio to the best baseline.}
\label{fig:dataset-factor-diagnostics-current}
\end{figure}

Figure~\ref{fig:dataset-factor-diagnostics-current} relates performance to leaf ordering quality, validation-to-test coverage transfer, sample size, and target discreteness. Shorter intervals and stronger ranks occur more often when the learned distribution orders nearby output regions accurately. Coverage failures align more directly with validation-to-test shifts than with sample size or target uniqueness alone. These relationships support the central mechanism: useful ordering in the output space enables compact intervals, while transfer quality governs whether the selected operating point remains valid.

\begin{figure}
\centering
\includegraphics[width=\linewidth]{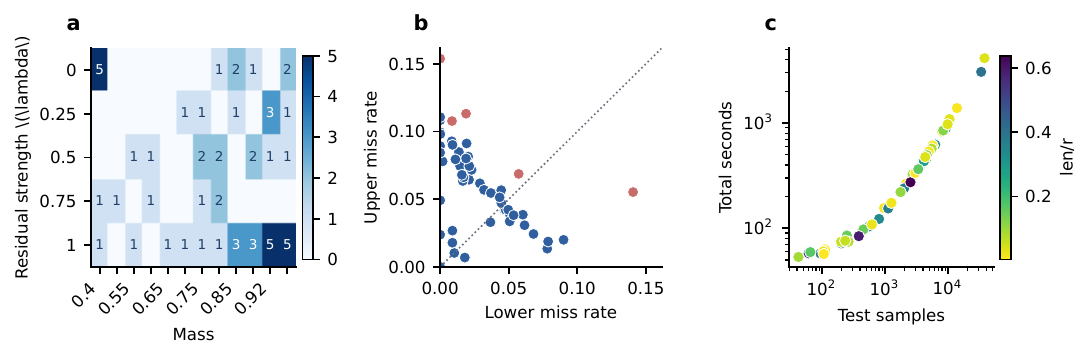}
\caption{Additional representative-split diagnostics for the main \method setting. (a) Mass selected on validation data and autoregressive residual strength. (b) Miss rates on the lower versus upper side, colored by coverage group. (c) Runtime scaling with test set size, colored by normalized interval length.}
\label{fig:path-diagnostics-current}
\end{figure}

Figure~\ref{fig:path-diagnostics-current} examines the internal choices made by \method after training. Selected mass and autoregressive residual strength vary across datasets, confirming that validation selection uses both the extraction and refinement controls rather than collapsing to one setting. The tail-miss panel isolates the asymmetric errors identified in Table~\ref{tab:coverage-diagnostics-current}, while the runtime panel shows predictable scaling with test-set size. The diagnostics link adaptive interval construction to both the observed compactness and the small number of transfer failures.

\begin{figure}
\centering
\includegraphics[width=\linewidth]{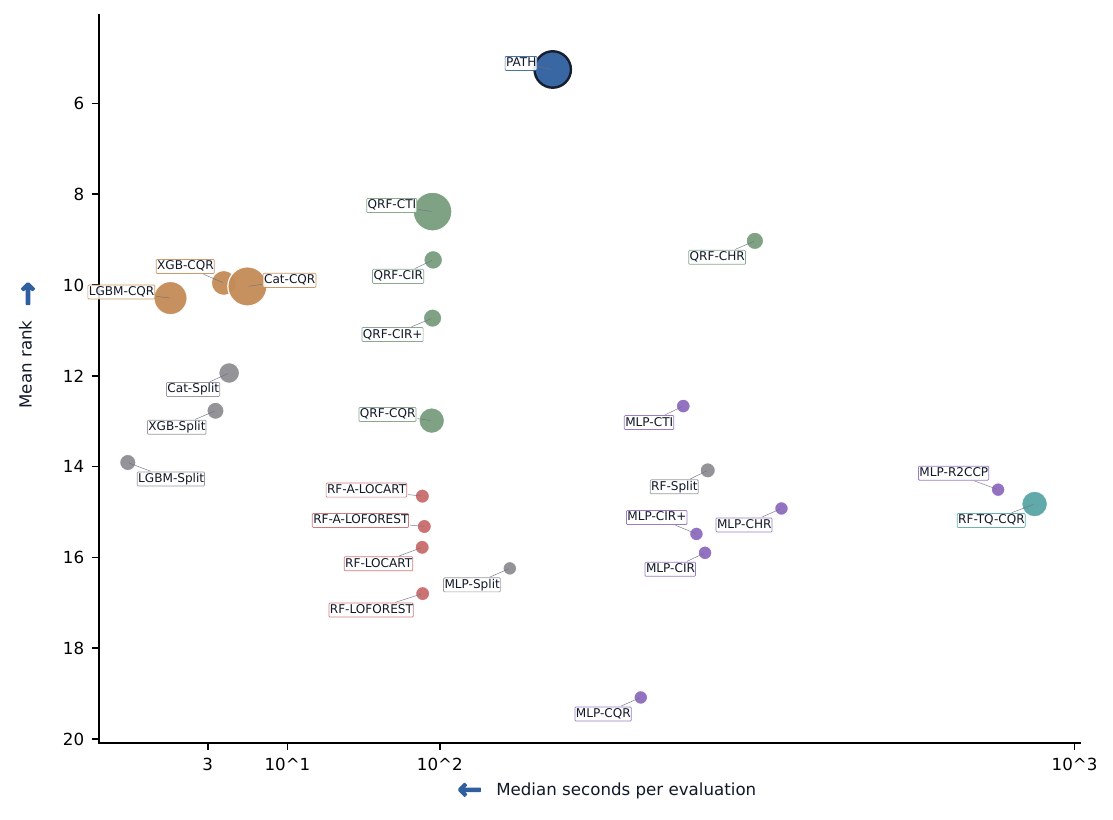}
\caption{Runtime and rank tradeoff across all 25 compared methods over 10 random seeds. Each circle is one labeled method; larger circles indicate lower aggregate score/rng. The compressed logarithmic horizontal coordinate reports median runtime per evaluation, and the vertical coordinate reports mean rank under the coverage-based ranking rule; lower rank is better.}
\label{fig:runtime-tradeoff-current}
\end{figure}

Figure~\ref{fig:runtime-tradeoff-current} compares median runtime with mean rank over the complete 10-split evaluation. \method achieves the best aggregate rank despite using more computation than the fastest tree baselines. It remains faster than several boosted-tree, neural, and TreeQuantile alternatives that obtain worse ranks. The plot therefore places \method at the strongest rank operating point among methods with comparable or greater computational cost.

\begin{figure}
\centering
\includegraphics[width=\linewidth]{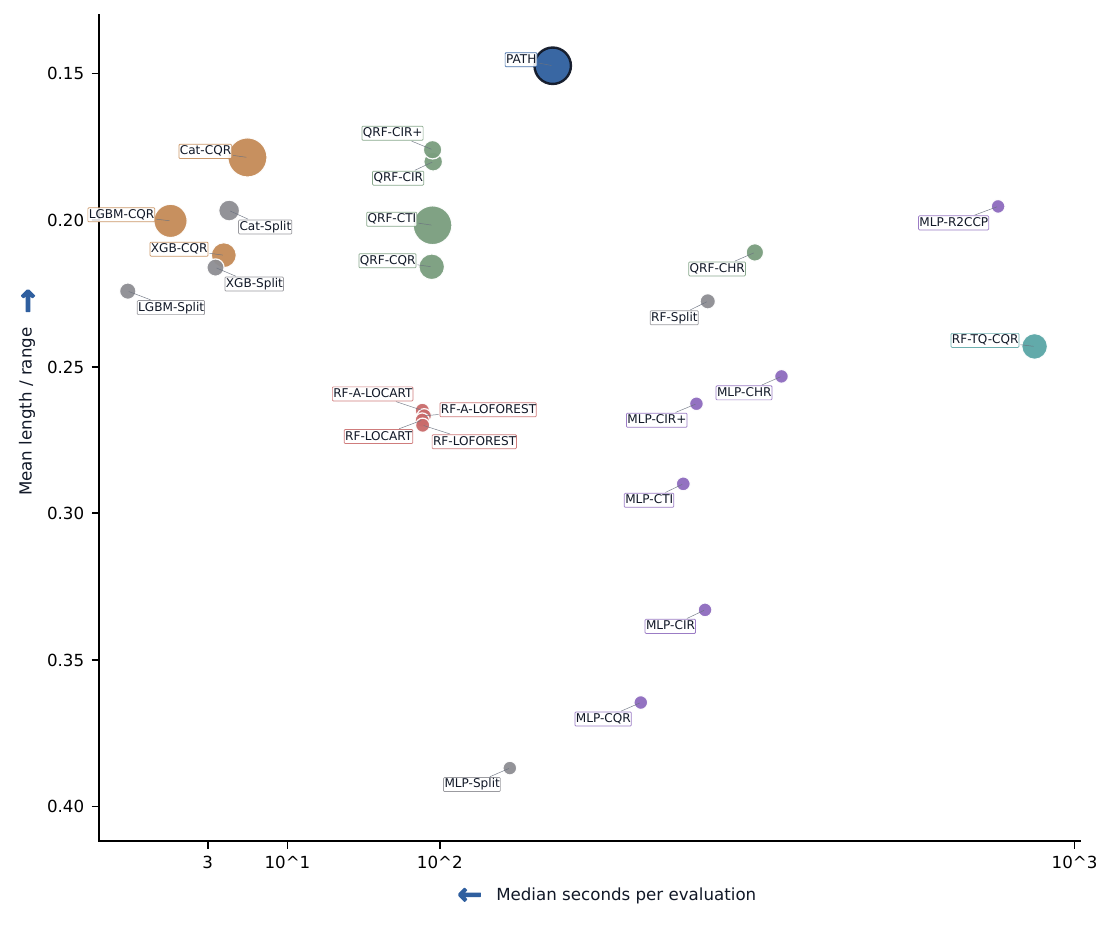}
\caption{Runtime and length tradeoff across all 25 compared methods over 10 random seeds. Each circle is one labeled method; larger circles indicate lower aggregate score/rng. The compressed logarithmic horizontal coordinate reports median runtime per evaluation, and the inverted vertical coordinate reports mean normalized interval length, so shorter intervals appear higher.}
\label{fig:runtime-length-tradeoff-current}
\end{figure}

Figure~\ref{fig:runtime-length-tradeoff-current} expresses the same comparison in interval-length terms. \method occupies the shortest-interval position across all 25 methods. Faster tree baselines return longer intervals, while several slower neural and boosted-tree alternatives also fail to match its compactness. The interval-length advantage therefore remains visible after computational cost is included explicitly.

\begin{figure}
\centering
\includegraphics[width=\linewidth]{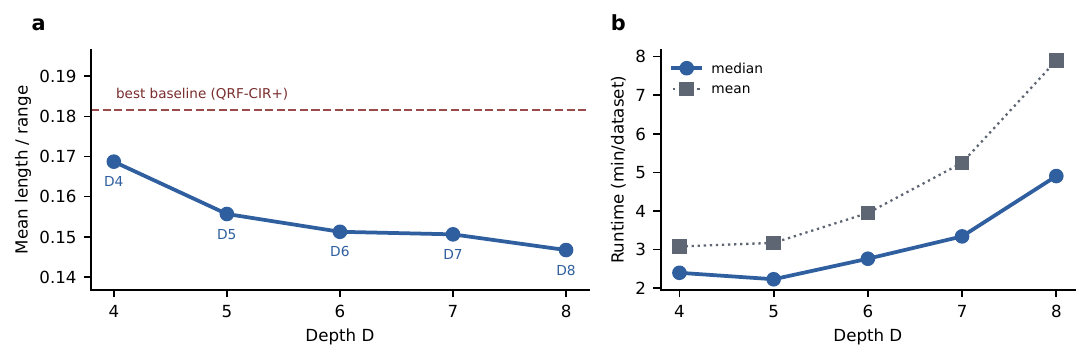}
\caption{Supplementary representative-split tradeoff between depth and runtime for \method at validation target \(\eta=0.905\). (a) Mean normalized interval length by depth, with the dashed line marking the best non-\method baseline in the representative-split comparison pool. (b) Median and mean runtime per dataset.}
\label{fig:depth-runtime-tradeoff-supp}
\end{figure}

Figure~\ref{fig:depth-runtime-tradeoff-supp} shows how increasing the tree depth shortens the extracted intervals while increasing computation. Moving from \(D=4\) to \(D=8\) reaches the most compact setting, with the additional runtime providing finer interval resolution.

\subsection{Strict-Coverage and Paired Bootstrap Analysis}
\label{app:sensitivity-bootstrap}

We evaluate interval efficiency at the nominal \(0.900\) coverage requirement over all 560 dataset--split evaluations. Methods reaching the target are ranked by normalized interval length within each evaluation. Remaining methods follow by coverage error and then length. Table~\ref{tab:strict-threshold-sensitivity-current} reports strict-valid counts, aggregate metrics, ranks, and wins for \method and six strong baselines.

\begin{table}
\centering
\caption{Strict-coverage comparison over 560 evaluations. Methods reaching coverage \(0.900\) are ranked by normalized interval length; the remaining methods follow by coverage error and length. Wins count first-place finishes under this rule.}
\label{tab:strict-threshold-sensitivity-current}
\footnotesize
\setlength{\tabcolsep}{4.0pt}
\begin{tabular*}{0.88\linewidth}{@{\extracolsep{\fill}}lrrrrr@{}}
\toprule
Method & strict & len/rng \(\downarrow\) & score/rng \(\downarrow\) & rank \(\downarrow\) & wins \(\uparrow\) \\
\midrule
\textbf{\method} & 429/560 & \textbf{0.1473} & 0.2627 & \textbf{6.46} & \textbf{177} \\
QRF-CTI & 396/560 & 0.2017 & 0.2578 & 9.36 & 121 \\
QRF-CQR & 417/560 & 0.2159 & 0.2914 & 11.45 & 11 \\
LightGBM-CQR & 349/560 & 0.2002 & 0.2698 & 11.77 & 14 \\
CatBoost-CQR & 313/560 & 0.1786 & 0.2571 & 12.37 & 14 \\
QRF-CIR & 273/560 & 0.1800 & 0.3243 & 13.35 & 13 \\
QRF-CIR+ & 216/560 & 0.1760 & 0.3264 & 15.62 & 16 \\
\bottomrule
\end{tabular*}
\end{table}

Table~\ref{tab:strict-threshold-sensitivity-current} shows that \method satisfies the strict target on 429/560 evaluations, exceeding QRF-CQR by 12 evaluations and QRF-CTI by 33. It simultaneously achieves the shortest mean length/range (\(0.1473\)), best mean rank (\(6.46\)), and most wins (177). The main advantage therefore persists when the nominal coverage requirement is enforced directly rather than through the \(0.885\) tolerance.

We perform paired bootstrap over datasets for the mean normalized length difference between each strong baseline and \method. For each dataset, we first average length/range over the 10 random seeds, then resample the 56 datasets with replacement and calculate the mean difference for each resample. Positive values in Table~\ref{tab:paired-bootstrap-current} indicate shorter \method intervals. The table reports \(95\%\) percentile bootstrap intervals and two-sided exact sign tests over dataset-level comparisons.

\begin{table}
\centering
\caption{Paired dataset bootstrap for mean normalized length gain. Gain is baseline length/range minus \method length/range; positive values indicate shorter \method intervals.}
\label{tab:paired-bootstrap-current}
\footnotesize
\setlength{\tabcolsep}{4.0pt}
\begin{tabular*}{0.82\linewidth}{@{\extracolsep{\fill}}lrrrr@{}}
\toprule
Baseline & gain \(\uparrow\) & 95\% CI & shorter datasets \(\uparrow\) & sign \(p\) \(\downarrow\) \\
\midrule
QRF-CTI & 0.0544 & [0.0156, 0.1005] & 36/56 & 0.044 \\
CatBoost-CQR & 0.0313 & [0.0192, 0.0453] & 46/56 & \(1.2{\times}10^{-6}\) \\
QRF-CIR+ & 0.0287 & [0.0147, 0.0467] & 42/56 & \(2.3{\times}10^{-4}\) \\
QRF-CIR & 0.0327 & [0.0173, 0.0523] & 47/56 & \(2.6{\times}10^{-7}\) \\
LightGBM-CQR & 0.0529 & [0.0288, 0.0848] & 49/56 & \(<10^{-8}\) \\
QRF-CQR & 0.0686 & [0.0480, 0.0920] & 54/56 & \(<10^{-8}\) \\
\bottomrule
\end{tabular*}
\end{table}

All bootstrap intervals in Table~\ref{tab:paired-bootstrap-current} are strictly positive, and every exact sign test rejects equality at the \(0.05\) level. Against QRF-CTI, the closest competitor by aggregate rank, \method gains \(0.0544\) mean length/range with a \(95\%\) interval of \([0.0156,0.1005]\). The gain is positive on 36/56 datasets despite this being the most difficult pairwise comparison. Against the remaining strong baselines, \method is shorter on 42--54 datasets with substantially smaller \(p\)-values. These paired results confirm that the aggregate compactness advantage is supported across datasets.

\begin{table}
\centering
\caption{Representative-split dataset-level coverage diagnostics for the main \(D=8,\eta=0.905\) setting.}
\label{tab:coverage-diagnostics-current}
\footnotesize
\setlength{\tabcolsep}{3.0pt}
\begin{tabular*}{\linewidth}{@{\extracolsep{\fill}}lrrrrrrrrr@{}}
\toprule
Dataset & cov & val cov & gap \(\downarrow\) & len/rng \(\downarrow\) & lower miss \(\downarrow\) & upper miss \(\downarrow\) & \(M\) & \(T\) & rank \(\downarrow\) \\
\midrule
laser & 0.8040 & 0.9095 & 0.1055 & 0.0574 & 0.1407 & 0.0553 & 0.55 & 1.20 & 23 \\
forest\_fires & 0.8462 & 0.9135 & 0.0673 & 0.0270 & 0.0000 & 0.1538 & 0.60 & 1.50 & 18 \\
meta & 0.8679 & 0.9057 & 0.0377 & 0.0055 & 0.0189 & 0.1132 & 0.70 & 0.70 & 22 \\
Titanic & 0.8740 & 0.9084 & 0.0344 & 0.0793 & 0.0573 & 0.0687 & 0.90 & 0.90 & 21 \\
socmob & 0.8836 & 0.9351 & 0.0514 & 0.0586 & 0.0086 & 0.1078 & 0.65 & 1.50 & 24 \\
\bottomrule
\end{tabular*}
\end{table}

Table~\ref{tab:coverage-diagnostics-current} isolates the five representative-split datasets below the \(0.885\) threshold; the remaining 51/56 datasets are valid. The failures coincide with validation-to-test coverage losses of \(0.0344\)--\(0.1055\) and, in four cases, strongly asymmetric tail misses. Their normalized lengths remain small, showing that the issue is selection transfer rather than uniformly wide or degenerate intervals. The analysis localizes the principal coverage failures to a small set of distribution-shifted datasets.

\subsection{Dataset Factors, Conditional Coverage, and Runtime}
\label{app:dataset-factor-runtime}

\begin{table}
\centering
\caption{Representative-split dataset factor summary for the main \method setting. Strata are quartiles by total sample size and by the ratio of unique target values.}
\label{tab:dataset-factor-summary-current}
\footnotesize
\setlength{\tabcolsep}{4.0pt}
\begin{tabular*}{0.92\linewidth}{@{\extracolsep{\fill}}lrrrrrr@{}}
\toprule
Stratum & datasets & mean rank \(\downarrow\) & wins \(\uparrow\) & valid & len/rng \(\downarrow\) & mean cov \\
\midrule
sample=small & 14 & 10.50 & 1 & 9/14 & 0.1396 & 0.9037 \\
sample=medium & 14 & 4.57 & 5 & 14/14 & 0.2289 & 0.9129 \\
sample=large & 14 & 5.36 & 6 & 14/14 & 0.1174 & 0.9118 \\
sample=xlarge & 14 & 3.00 & 3 & 14/14 & 0.1009 & 0.9282 \\
unique=low & 14 & 5.64 & 1 & 14/14 & 0.0746 & 0.9388 \\
unique=mid-low & 14 & 3.07 & 6 & 14/14 & 0.1685 & 0.9037 \\
unique=mid-high & 14 & 7.86 & 3 & 10/14 & 0.1315 & 0.8899 \\
unique=high & 14 & 6.86 & 5 & 13/14 & 0.2122 & 0.9242 \\
\bottomrule
\end{tabular*}
\end{table}

Table~\ref{tab:dataset-factor-summary-current} summarizes these relationships by sample-size and target-uniqueness quartiles. All 42 datasets in the medium, large, and extra-large sample strata satisfy the coverage threshold. The extra-large stratum achieves mean rank \(3.00\) and mean length/range \(0.1009\). The low and mid-low target-uniqueness strata are also valid on all 28 datasets. Coverage failures concentrate in the smallest sample and mid-high uniqueness groups, indicating where validation selection is least stable. Overall, \method is strongest when sufficient data support interval selection, while remaining compact across every sample-size stratum.

\begin{table}
\centering
\caption{Representative-split supplementary diagnostics for all compared methods. Gaps are macro averages over datasets; runtime is median total seconds per dataset.}
\label{tab:diagnostic-metrics-current}
\footnotesize
\setlength{\tabcolsep}{3.0pt}
\begin{tabular*}{\linewidth}{@{\extracolsep{\fill}}lrrrrrr@{}}
\toprule
Method & \shortstack{lower miss\\\(\downarrow\)} & \shortstack{upper miss\\\(\downarrow\)} & \shortstack{val/test gap\\\(\downarrow\)} & \shortstack{target quartile gap\\\(\downarrow\)} & \shortstack{length tertile gap\\\(\downarrow\)} & sec \(\downarrow\) \\
\midrule
\textbf{\method} & 0.0273 & 0.0586 & 0.0113 & 0.1802 & 0.0507 & 294.2 \\
QRF-CTI & 0.0094 & 0.0196 & 0.0107 & 0.2040 & 0.0897 & 88.8 \\
QRF-CHR & 0.0238 & 0.0669 & 0.0127 & 0.1957 & 0.0511 & 358.4 \\
QRF-CIR+ & 0.0290 & 0.0710 & 0.0134 & 0.2051 & 0.0574 & 87.2 \\
QRF-CIR & 0.0296 & 0.0674 & 0.0134 & 0.1996 & 0.0587 & 86.4 \\
XGBoost-CQR & 0.0382 & 0.0548 & 0.0107 & 0.1785 & 0.0872 & 1304.2 \\
LightGBM-CQR & 0.0365 & 0.0537 & 0.0122 & 0.1652 & 0.0760 & 1083.5 \\
CatBoost-CQR & 0.0422 & 0.0513 & 0.0118 & 0.1408 & 0.0606 & 5.3 \\
CatBoost-SplitCP & 0.0395 & 0.0539 & 0.0128 & 0.2246 & 0.0853 & 3.1 \\
MLP-CTI & 0.0099 & 0.0583 & 0.0116 & 0.2729 & 0.1249 & 233.3 \\
QRF-CQR & 0.0359 & 0.0416 & 0.0112 & 0.1465 & 0.0560 & 83.4 \\
XGBoost-SplitCP & 0.0456 & 0.0523 & 0.0132 & 0.2364 & 0.0577 & 895.4 \\
LightGBM-SplitCP & 0.0453 & 0.0507 & 0.0115 & 0.2207 & 0.0896 & 1654.2 \\
RF-SplitCP & 0.0397 & 0.0563 & 0.0118 & 0.2388 & 0.0551 & 448.1 \\
RF-A-LOCART & 0.0321 & 0.0592 & 0.0079 & 0.1990 & 0.0406 & 77.8 \\
MLP-R2CCP & 0.0835 & 0.0505 & 0.0112 & 0.2938 & 0.1433 & 984.1 \\
MLP-CHR & 0.0485 & 0.0738 & 0.0108 & 0.2582 & 0.0901 & 506.7 \\
RF-TQ-CQR & 0.0386 & 0.0349 & 0.0120 & 0.1747 & 0.0654 & 699.6 \\
RF-A-LOFOREST & 0.0439 & 0.0575 & 0.0090 & 0.2058 & 0.0509 & 78.9 \\
MLP-CIR+ & 0.0228 & 0.1154 & 0.0108 & 0.2699 & 0.1178 & 232.0 \\
RF-LOCART & 0.0322 & 0.0585 & 0.0090 & 0.1982 & 0.0399 & 80.2 \\
MLP-SplitCP & 0.0339 & 0.0612 & 0.0135 & 0.2460 & 0.0531 & 122.4 \\
MLP-CIR & 0.0229 & 0.0663 & 0.0102 & 0.2231 & 0.0914 & 233.9 \\
RF-LOFOREST & 0.0595 & 0.0562 & 0.0127 & 0.2126 & 0.0525 & 78.9 \\
MLP-CQR & 0.0402 & 0.0547 & 0.0101 & 0.2091 & 0.0759 & 231.4 \\
\bottomrule
\end{tabular*}
\end{table}

Table~\ref{tab:diagnostic-metrics-current} compares conditional coverage and runtime across every method on the representative split. The \method validation-to-test gap is \(0.0113\), comparable to the strongest baselines. Its target-quartile gap, \(0.1802\), improves on QRF-CTI, QRF-CHR, QRF-CIR, and QRF-CIR+, while its length-tertile gap \(0.0507\) remains among the lower values. Runtime is higher than the fastest tree methods but below several neural, boosted-tree, and TreeQuantile alternatives. These diagnostics show that the shortest intervals are obtained with competitive conditional coverage variation and moderate computation.

\section{Per-Dataset Details}
\label{app:per-dataset-results}

Table~\ref{tab:per-dataset-detailed-results} provides frozen records for every method and dataset on the representative split. It exposes coverage, normalized length, validity, and rank under the same rule as the main comparison; coverage error is omitted because it is determined directly by coverage and the nominal target. The table therefore permits direct verification of both \method wins and datasets where strong baselines remain competitive.

\begingroup
\footnotesize
\setlength{\tabcolsep}{2.5pt}
\renewcommand{\arraystretch}{1.03}

\endgroup

Table~\ref{tab:per-dataset-detailed-results} shows that \method is valid on 51/56 datasets and ranks first on 15 datasets in the representative split, the largest win count in that comparison. Its wins span datasets with markedly different target scales, sample sizes, and feature structures. The remaining rows identify the settings where QRF-CTI and other strong baselines are competitive. These records confirm that the aggregate advantage is distributed across heterogeneous regression tasks rather than driven by a small set of datasets.

\end{document}